\documentclass{article}

\usepackage[nonatbib,final]{neurips_2020}

\usepackage[style=numeric-comp,sorting=none,maxbibnames=99,maxcitenames=1]{biblatex}
\AtEveryBibitem{\clearfield{month}}
\AtEveryCitekey{\clearfield{month}}
\AtEveryBibitem{\clearlist{language}}
\renewbibmacro{in:}{}
\addbibresource{refs.bib}

\usepackage[utf8]{inputenc} 
\usepackage[T1]{fontenc}    

\usepackage[colorlinks,linktoc=all,bookmarks=false]{hyperref}
\usepackage[all]{hypcap}
\hypersetup{citecolor=black}
\hypersetup{linkcolor=black}
\hypersetup{urlcolor=black}

\usepackage{url}            
\usepackage{booktabs}       
\usepackage{amsfonts}       
\usepackage{nicefrac}       
\usepackage{microtype}      

\usepackage{appendix}

\usepackage[dvipsnames,table]{xcolor}
\usepackage[algoruled]{algorithm2e}
\usepackage{ahw_macros}
\usepackage{parskip}
\usepackage{appendix}
\usepackage{enumerate}

\title{Point process models for sequence detection in high-dimensional neural spike trains}

\author{%
  Alex H. Williams \\
  Department of Statistics \\
  Stanford University \\
  Stanford, CA 94305 \\
  \texttt{ahwillia@stanford.edu} \\
  \And
  Anthony Degleris \\
  Department of Electrical Engineering \\
  Stanford University \\
  Stanford, CA 94305 \\
  \texttt{degleris@stanford.edu} \\
  \AND
  Yixin Wang \\
  Department of Statistics \\
  Columbia University \\
  New York NY 10027\\
  \texttt{yixin.wang@columbia.edu}
  \And 
  Scott W. Linderman \\
  Department of Statistics \\
  Stanford University \\
  Stanford, CA 94305 \\
  \texttt{scott.linderman@stanford.edu} \\
}

\begin{document}

\begin{refsection}

\maketitle

\begin{abstract}
Sparse sequences of neural spikes are posited to underlie aspects of working memory \cite{Goldman2009}, motor production \cite{Hahnloser2002}, and learning \cite{Eichenbaum2014,Mackevicius2019}.
Discovering these sequences in an unsupervised manner is a longstanding problem in statistical neuroscience \cite{Abeles2001, Russo2017, Quaglio2018}.
Promising recent work \cite{Mackevicius2019,Peter2017} utilized a convolutive nonnegative matrix factorization model \cite{Smaragdis2006} to tackle this challenge.
However, this model requires spike times to be discretized, utilizes a sub-optimal least-squares criterion, and does not provide uncertainty estimates for model predictions or estimated parameters.
We address each of these shortcomings by developing a point process model that characterizes fine-scale sequences at the level of individual spikes and represents sequence occurrences as a small number of marked events in continuous time.
This ultra-sparse representation of sequence events opens new possibilities for spike train modeling.
For example, we introduce learnable time warping parameters to model sequences of varying duration, which have been experimentally observed in neural circuits \cite{Davidson2009}.
We demonstrate these advantages on experimental recordings from songbird higher vocal center and rodent hippocampus.
\end{abstract}

\section{Introduction}

Identifying interpretable patterns in multi-electrode recordings is a longstanding and increasingly pressing challenge in neuroscience.
Depending on the brain area and behavioral task, the activity of large neural populations may be low-dimensional as quantified by principal components analysis (PCA) or other latent variable models \cite{Smith2003,Briggman2005,Yu2009,Paninski2010,Macke2011,Pfau2013,Gao2015,Gao2016,Zhao2017,Wu2018,Williams2018,Linderman2019,Duncker2019}.
However, many datasets do not conform to these modeling assumptions and instead exhibit high-dimensional behaviors \cite{Stringer2019}.
\textit{Neural sequences} are an important example of high-dimensional structure: if~$N$ neurons fire sequentially with no overlap, the resulting dynamics are~$N$-dimensional and cannot be efficiently summarized by PCA or other linear dimensionality reduction methods \cite{Mackevicius2019}.
Such sequences underlie current theories of working memory~\cite{Goldman2009,Harvey2012}, motor production \cite{Hahnloser2002}, and memory replay \cite{Davidson2009}. More generally, neural sequences are seen as flexible building blocks for learning and executing complex neural computations \cite{Rabinovich2006,Mackevicius2018,Buzsaki2018}.

\textbf{Prior Work}\quad
In practice, neural sequences are usually identified in a supervised manner by correlating neural firing with salient sensory cues or behavioral actions.
For example, hippocampal place cells fire sequentially when rodents travel down a narrow hallway, and these sequences can be found by averaging spikes times over multiple traversals of the hallway.
After identifying this sequence on a behavioral timescale lasting seconds, a template matching procedure can be used to show that these sequences reoccur on compressed timescales during wake \cite{Pastalkova2008} and sleep \cite{Ji2007}.
In other cases, sequences can be identified relative to salient features of the local field potential (LFP; \cite{Joo2018,Xu2019}).

Developing unsupervised alternatives that directly extract sequences from multineuronal spike trains would broaden the scope of this research and potentially uncover new sequences that are not linked to behaviors or sensations \cite{Tingley2020}.
Several works have shown preliminary progress in this direction
Maboudi et al. \cite{Maboudi2018} proposed fitting a hidden Markov model (HMM) and then identifying sequences from the state transition matrix.
Grossberger et al. \cite{Grossberger2018} and van der Meij \& Voytek \cite{Meij2018} apply clustering to features computed over a sliding window.
Others use statistics such as time-lagged correlations to detect sequences and other spatiotemporal patterns in a bottom-up fashion \cite{Russo2017,Quaglio2018}.

In this paper, we develop a Bayesian point process modeling generalization of \textit{convolutive nonnegative matrix factorization} (convNMF; \cite{Smaragdis2006}), which was recently used by \textcite{Peter2017} and \textcite{Mackevicius2019} to model neural sequences.
Briefly, the convNMF model discretizes each neuron's spike train into a vector of $B$ time bins,~$\bx_n \in \R_+^{_B}$ for neuron~$n$, and approximates this collection of vectors as a sum of convolutions,
$\bx_n \approx \sum_{r=1}^{_R} \bw_{n,r} * \bh_r$. The model parameters ($\bw_{n,r} \in \R_+^{_L}$ and~$\bh_r \in \R^{_B}_+$) are optimized with respect to a least-squares criterion.
Each component of the model, indexed by ${r \in \{1, \hdots, R\}}$, consists of a \textit{neural factor},~$\mathbf{W}_r \in \R^{_{N \times L}}_+$ and a \textit{temporal factor},~$\bh_r$.
The neural factor encodes a spatiotemporal pattern of neural activity over $L$ time bins (which is hoped to be sequence), while the temporal factor indicates the times at which this pattern of neural activity occurs.
A total of $R$ motifs or sequence types, each corresponding to a different component, are extracted by this model.

There are several compelling features of this approach.
Similar to classical nonnegative matrix factorization~\cite{Lee1999}, and in contrast to clustering methods, convNMF captures sequences with overlapping groups of neurons by an intuitive ``parts-based'' representation.
Indeed, convNMF uncovered overlapping sequences in experimental data from songbird higher vocal center (HVC) \cite{Mackevicius2019}.
Further, if the same sequence is repeated with different peak firing rates, convNMF can capture this by varying the magnitude of the entries in~$\bh_r$, unlike many clustering methods.
Finally, convNMF efficiently pools statistical power across large populations of neurons to identify sequences even when the correlations between temporally adjacent neurons are noisy---other methods, such as HMMs and bottom-up agglomerative clustering, require reliable pairwise correlations to string together a full sequence.

\textbf{Our Contributions}\quad
We propose a point process model for neural sequences (PP-Seq) which extends and generalizes convNMF to continuous time and uses a fully probabilistic Bayesian framework.
This enables us to better quantify uncertainty in key parameters---e.g. the overall number of sequences the times at which they occur---and also characterize the data at finer timescales---e.g. whether individual spikes were evoked by a sequence.
Most importantly, by achieving an extremely sparse representation of sequence event times, the PP-Seq model enables a variety of model extensions that are not easily incorported into convNMF or other common methods.
We explore one such extension that introduces time warping factors to model sequences of varying duration, as is often observed in neural data \cite{Davidson2009}.

Though we focus on applications in neuroscience, our approach could be adapted to other temporal point processes, which are a natural framework to describe data that are collected at irregular intervals (e.g. social media posts, consumer behaviors, and medical records) \cite{Moller2003,Gomez2018}.
We draw a novel connection between Neyman-Scott processes \cite{Neyman1958}, which encompass PP-Seq and other temporal point process models as special cases, and mixture of finite mixture models \cite{Miller2018}.
Exploiting this insight, we develop innovative Markov chain Monte Carlo (MCMC) methods for PP-Seq.

\section{Model}
\label{sec:model}

\begin{figure}
\centering
\includegraphics[width=\linewidth]{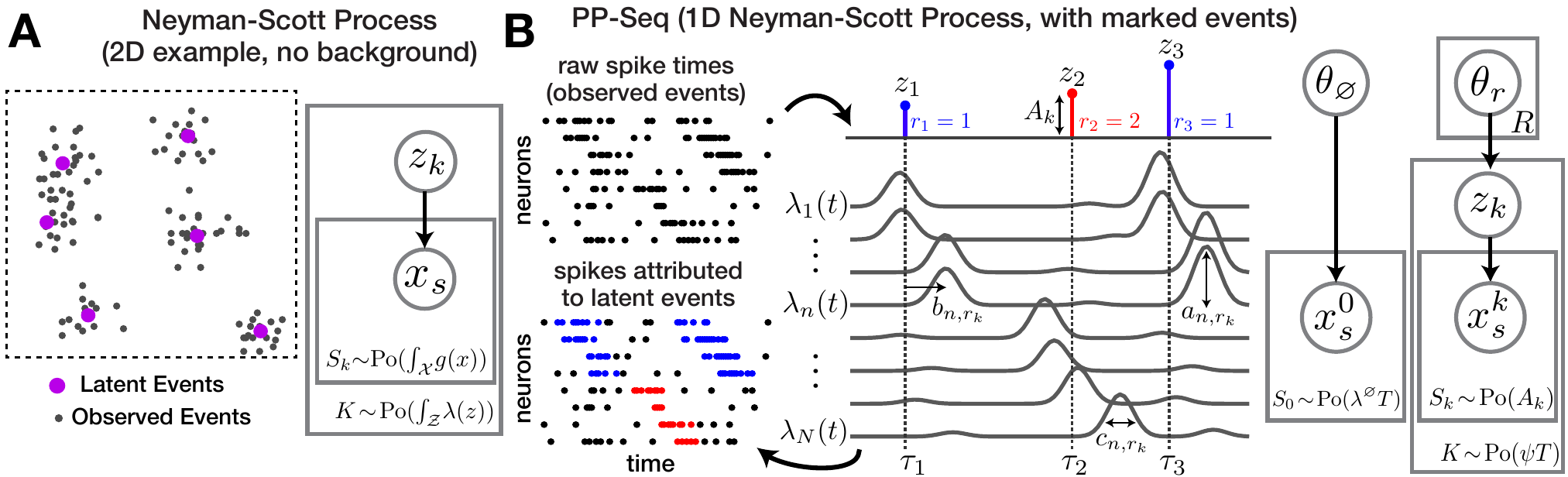}
\caption{
\textit{(A)} Example of a Neyman-Scott process over a 2D region. Latent events (purple dots) are first sampled from a homogeneous Poisson process.
Each latent event then spawns a number of nearby observed events (gray dots), according to an inhomogeneous Poisson process.
\textit{(B)} A spike train can be modeled as a Neyman-Scott process with \textit{marked} events over a 1D interval representing time. Latent events ($z_k$; sequences) evoke observed events ($x_s$; spikes) ordered in a sequence. An example with~$K=3$ latent events evoking~$R=2$ different sequence types (blue \& red) is shown.
}
\label{fig:schematic}
\end{figure}

\subsection{Point Process Models and Neyman-Scott Processes}
\label{sec:neyman-scott-background}

Point processes are probabilistic models that generate discrete sets ${\{x_1, x_2, \hdots \} \triangleq \{x_s\}_{s=1}^S}$ over some continuous space $\mcX$.
Each member of the set $x_s \in \mcX$ is called an ``event.''
A Poisson process~\cite{Kingman2002} is a point process which satisfies three properties.
First, the number of events falling within any region~$V \subset \mcX$ is Poisson distributed.
Second, there exists a function~$\lambda(x): \mathcal{X} \mapsto \R_+$, called the \textit{intensity function}, for which~$\int_V \lambda(x) \, \mathrm{d} x$ equals the expected number of events in~$V$.
Third, the number of events in non-overlapping regions of~$\mcX$ are independent.
A Poisson process is said to be \textit{homogeneous} when~$\lambda(x) = c$ for some constant~$c$.
Finally, a \textit{marked} point process extends the usual definition of a point process to incorporate additional information into each event.
A marked point process on~$\mcX$ generates random tuples~$x_s = (\tilde x_s, m_s)$, where~$\tilde x_s \in \mcX$ represents the random location and~$m_s \in \mcM$ is the additional ``mark'' specifying metadata associated with event~$s$.
See Supplement A for further background details and references.

\textit{Neyman-Scott Processes} \cite{Neyman1958} use Poisson processes as building blocks to model clustered data.
To simulate a Neyman-Scott process, first sample a set of \textit{latent events}~$\{z_k\}_{k=1}^K \subset \mcZ$ from an initial Poisson process with intensity function~$\lambda(z): \mcZ \mapsto \R_+$.
Thus, the number of latent events is a Poisson-distributed random variable,~$K \sim \text{Poisson}(\int_\mcZ \lambda(z) \mathrm{d}z)$.
Given the latent events, the \textit{observed events}~$\{x_s\}_{s=1}^{S}$ are drawn from a second Poisson process with conditional intensity function~${\lambda(x) = \lambda^\varnothing + \sum_{k=1}^{_K} g(x; z_k)}$.
The nonnegative functions~$g(x; z_k)$ can be thought of as impulse responses---each latent event adds to the rate of observed events, and stochastically generates some number of observed offspring events.
Finally, the scalar parameter~$\lambda^\varnothing > 0$ specifies a ``background'' rate; thus, if~$K = 0$, the observations follow a homogeneous Poisson process with intensity~$\lambda^\varnothing$.

A simple example of a Neyman-Scott Process in~$\R^2$ is shown in \cref{fig:schematic}A.
Latent events (purple dots) are first drawn from a homogenous Poisson process and specify the cluster centroids.
Each latent event induces an isotropic Gaussian impulse response.
Observed events (gray dots) are then sampled from a second Poisson process, whose intensity function is found by summing all impulse responses.

Importantly, we \textit{do not} observe the number of latent events nor their locations.
As described below, we use probabilistic inference to characterize this unobserved structure.
A key idea is to attribute each observed event to a latent cause (either one of the latent events or the background process); these attributions are valid due to the additive nature of the latent intensity functions and the \textit{superposition principle} of Poisson processes (see Supplement A).
From this perspective, inferring the set of latent events is similar to inferring the number and location of clusters in a nonparametric mixture model.

\subsection{Point Process Model of Neural Sequences (PP-Seq)}

We model neural sequences as a Neyman-Scott process with \textit{marked} events in a model we call PP-Seq.
Consider a dataset with~$N$ neurons, emitting a total of~$S$ spikes over a time interval~$[0, T]$.
This can be encoded as a set of~$S$ marked spike times---the observed events are tuples~${x_s = (t_s, n_s)}$ specifying the time,~$t_s \in [0, T]$, and neuron,~${n_s \in \{1 \, , \, \hdots \, , \, N\}}$, of each spike.
Sequences correspond to latent events, which are also tuples~$z_k = (\tau_k, r_k, A_k)$ specifying the time,~$\tau_k \in [0, T]$, type,~${r_k \in \{1 \, , \, \hdots \, , \, R\}}$, and amplitude,~$A_k > 0$ of the sequence.
The hyperparameter~$R$ specifies the number of recurring sequence types, which is analogous to the number of components in convNMF.

To draw samples from the PP-Seq model, we first sample sequences (i.e., latent events) from a Poisson process with intensity ${\lambda(z) \triangleq \lambda(\tau, r, A)\!=\!\psi \, \pi_r \, \mathrm{Ga}(A; \alpha, \beta)}$. Here,~$\psi > 0$ sets the rate at which sequences occur within the data,~$\pi \in \Delta_R$ sets the probability of the~$R$ sequence types, and~$\alpha, \beta$ parameterize a gamma density which models the sequence amplitude, $A$.
Note that the number of sequence events is a Poisson-distributed random variable,~$K \sim \text{Poisson}(\psi T)$, where the rate parameter $\psi T$ is found by integrating $\lambda(z)$ over all sequence types, amplitudes, and times.

Conditioned on the set of $K$ sequence events, the firing rate of neuron~$n$ is given by a sum of nonnegative impulse responses:
\begin{equation}
\label{eq:pp-seq-rate-func}
\lambda_n(t) = \lambda^\varnothing_n + \sum_{k=1}^K g_n(t; z_k).
\end{equation}
We assume these impulse responses vary across neurons and follow a Gaussian form:
\begin{equation}
\label{eq:pp-seq-impulse-response}
g_n(t; z_k) = A_k \cdot a_{n r_k} \cdot \mcN(t \mid \tau_k + b_{n r_k}, c_{n  r_k}),
\end{equation}
where~$\mcN(t \mid \mu, \sigma^2)$ denotes a Gaussian density.
The parameters~${a_r = (a_{1r}, \ldots, a_{Nr}) \in \Delta_N}$, ${b_{nr} \in \mathbb{R}}$, and~${c_{nr} \in \mathbb{R}_+}$ correspond to the weight, latency, and width, respectively, of neurons' firing rates in sequences of type~$r$.
Since the firing rate is a sum of non-negative impulse responses, the superposition principle of Poisson processes (see Supplement A) implies that we can view the data as a union of ``background'' spikes and ``induced'' spikes from each sequence, justifying the connection to clustering.
The expected number of spikes induced by sequence~$k$ is:
\begin{equation}
\sum_{n=1}^N \int_0^T g_n(t; z_k) \mathrm{d}t \approx \sum_{n=1}^N \int_{-\infty}^\infty g_n(t; z_k) \mathrm{d}t = A_k,
\end{equation}
and thus we may view~$A_k$ as the amplitude of sequence event $k$.

\Cref{fig:schematic}B schematizes a simple case containing~$K=3$ sequence events and~$R=2$ sequence types.
A complete description of the model's generative process is provided in Supplement B, but it can be summarized by the graphical model in \cref{fig:schematic}B, where we have global parameters~${\Theta = (\theta_\varnothing, \{\theta_r\}_{r=1}^R)}$ with~${\theta_r = (a_r, \{ b_{nr} \}_{n=1}^N, \{ c_{nr} \}_{n=1}^N)}$ for each sequence type, and~$\theta_\varnothing = \{ \lambda^\varnothing_n \}_{n=1}^N$ for the background process.
We place weak priors on each parameter: the neural response weights~$\{a_{nr}\}_{n=1}^N$ follow a Dirichlet prior for each sequence type, and~$(b_{nr}, c_{nr})$ follows a normal-inverse-gamma prior for every neuron and sequence type.
The background rate, $\lambda_n^\varnothing$, follows a gamma prior.
We set the sequence event rate, $\psi$, to be a fixed hyperparameter, though this assumption could be relaxed.

\textbf{Time-warped sequences}\quad
PP-Seq can be extended to model more diverse sequence patterns by using higher-dimensional marks on the latent sequences.
For example, we can model variability in sequence duration by introducing a \textit{time warping factor},~$\omega_k > 0$, to each sequence event and changing \cref{eq:pp-seq-impulse-response} to,
\begin{equation}
\label{eq:warped-impulse-response}
g_n(t; z_k) = A_k \cdot a_{n,r_k} \cdot \mcN(t \mid \tau_k + \omega_k b_{n, r_k}, \omega_k^2 c_{n, r_k}).
\end{equation}
This has the effect of linearly compressing or stretching each sequence in time (when~$\omega_k < 1$ or~$\omega_k > 1$, respectively).
Such time warping is commonly observed in neural data \cite{Duncker2018,Williams2020}, and indeed, hippocampal sequences unfold~$\sim$15-20 times faster during replay than during lived experiences \cite{Davidson2009}.
We characterize this model in Supplement E and demonstrate its utility below.

In principle, it is equally possible to incorporate time warping into discrete time convNMF.
However, since convNMF involves a dense temporal factor matrix~$\bH \in \R_+^{R \times B}$, the most straightforward extension would be to introduce a time warping factor for each component $r \in \{1, \hdots, R\}$ and each time bin $b \in \{1, \hdots, B\}$.
This results in $O(RB)$ new trainable parameters, which poses non-trivial challenges in terms of computational efficiency, overfitting, and human interpretability.
In contrast, PP-Seq represents sequence events as a set of~$K$ latent events in continuous time.
This ultra-sparse representation of sequence events (since~$K \ll R B$) naturally lends itself to modeling additional sequence features since this introduces only~$O(K)$ new parameters.

\section{Collapsed Gibbs Sampling for Neyman-Scott Processes}

Developing efficient algorithms for parameter inference in Neyman-Scott process models is an area of active research \cite{Tanaka2008,Tanaka2014,Kopecky2016,Ogata2019}.
To address this challenge, we developed a collapsed Gibbs sampling routine for Neyman-Scott processes, which encompasses the PP-Seq model as a special case.
The method resembles ``Algorithm 3'' of \textcite{Neal2000}---a well-known approach for sampling from a Dirichlet process mixture model---and the collapsed Gibbs sampling algorithm for ``mixture of finite mixtures'' models developed by \textcite{Miller2018}.
The idea is to partition observed spikes into background spikes and spikes induced by latent sequences, integrating over the sequence times, types, and amplitudes. 
Starting from an initial partition, the sampler iterates over individual spikes and probabilistically re-assigns them to \textit{(a)} the background, \textit{(b)} one of the remaining sequences, or \textit{(c)} to a new sequence.
The number of sequences in the partition, $K^*$, changes as spikes are removed and re-assigned; thus, the algorithm is able to explore the full trans-dimensional space of partitions.

The re-assignment probabilities are determined by the prior distribution of partitions under the Neyman-Scott process and by the likelihood of the induced spikes assigned to each sequence.
We state the conditional probabilities below and provide a full derivation in Supplement D.  
Let~$K^*$ denote the number of sequences in the current partition after spike~$x_s$ has been removed from its current assignment.
(Note that the number of latent sequences~$K$ may exceed~$K^*$ if some sequences produce zero spikes.)
Likewise, let~$u_{s}$ denote the sequence assignment of the~$s$-th spike, where~$u_{s} = 0$ indicates assignment to the background process and~$u_{s} \in \{1, \hdots, K^* \}$ indicates assignment to one of the latent sequence events.
Finally, let~$X_k = \{x_{s'}: u_{s'}=k, s'\neq s\}$ denote the spikes in the~$k$-th cluster, excluding~$x_s$, and let~$S_k = |X_k|$ denote its size.
The conditional probability of the partition under the possible assignments of spike~$x_s$ are,
\begin{align}
p(u_s = 0 \mid x_s, \{X_k\}_{k=1}^{K^*}, \Theta) &\propto (1 + \beta) \, \lambda^\varnothing_{n_s} 
\label{eq:gibbs-bkgd}
\\[.5em]
p(u_s = k \mid x_s, \{X_k\}_{k=1}^{K^*}, \Theta) &\propto (\alpha + S_k ) \, \!\left[\sum_{r_k=1}^R p(r_k \mid X_k) \, a_{n_s r_k} \, 
p(t_s \mid X_k, r_k, n_s) \right]
\label{eq:gibbs-existing}
\\
p(u_s = K^* + 1 \mid x_s, \{X_k\}_{k=1}^{K^*}, \Theta) &\propto \alpha \left ( \frac{\beta}{1 + \beta} \right )^\alpha  \psi \sum_{r=1}^R \pi_r \, a_{n_s r}
\label{eq:gibbs-new}
\end{align}
The sampling algorithm iterates over all spikes~$x_s \in \{x_1, \hdots, x_S\}$ and updates their assignments holding the other spikes' assignments fixed.  
The probability of assigning spike~$x_s$ to an existing cluster marginalizes the time, type, and amplitude of the sequence, resulting in a \textit{collapsed} Gibbs sampler \cite{Liu1994,Neal2000}.   
The exact form of the posterior probability~$p(r_k \mid X_k)$ and the parameters of the posterior predictive~$p(t_s \mid X_k, r_k, n_s)$ in \cref{eq:gibbs-existing} are given in Supplement C.

After attributing each spike to a latent cause, it is straightforward to draw samples over the remaining model parameters---the latent sequences~$\{z_k\}_{k=1}^{K^*}$ and global parameters~$\Theta$.  
Given the spikes and assignments~$\{x_s, u_s\}_{s=1}^S$, we sample the sequences (i.e. their time, amplitude, types, etc.) from the closed-form conditional~$p(z_k \mid \{x_s: u_s=k\}, \Theta)$.  
Given the sequences and spikes, we sample the conditional distribution on global parameters~${p(\Theta \mid \{z_k\}_{k=1}^{K^*}, \{x_s, u_s\}_{s=1}^S})$.  
Under conjugate formulations, these updates are straightforward.
With these steps, the Markov chain targets the posterior distribution on model parameters and partitions.  Complete derivations are in Supplement~D.

\begin{figure}
\centering
\includegraphics[width=\linewidth]{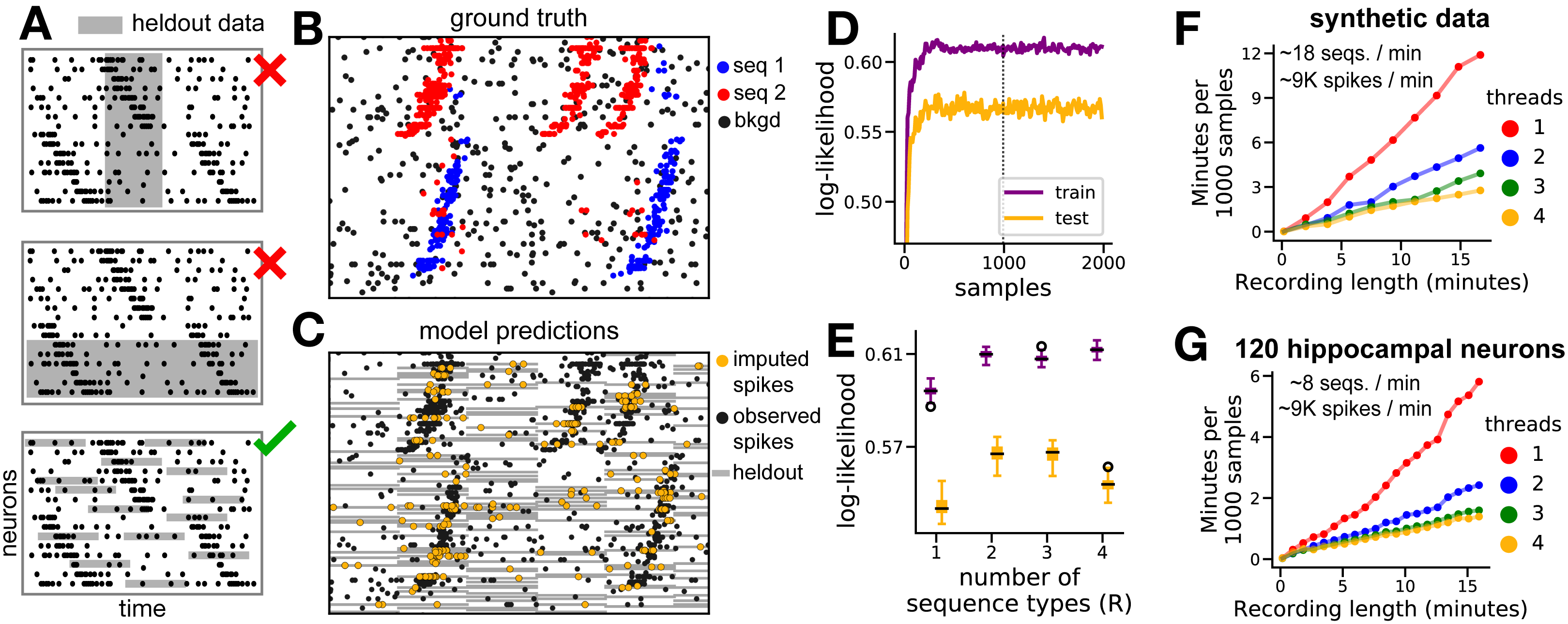}
\caption{
\textit{(A)} Schematic of train/test partitions. We propose a speckled holdout pattern (bottom).
\textit{(B)} A subset of a synthetic spike train containing two sequences types.
\textit{(C)} Same data, but with grey regions showing the censored test set and yellow dots denoting imputed spikes.
\textit{(D)} Log-likelihood over Gibbs samples; positive values denote excess nats per unit time relative to a homogeneous Poisson process baseline.
\textit{(E)} Box plots showing range of log-likelihoods on the train and test sets for different choices of~$R$; cross-validation favors~$R=2$, in agreement with the ground truth shown in panel B.
\textit{(F-G)} Performance benefits of parallel MCMC on synthetic and experimental neural data.
}
\label{fig:crossval}
\end{figure}

\textbf{Improving MCMC mixing times}\quad The intensity of sequence amplitudes~$A_k$ is proportional to the gamma density ~${\mathrm{Ga}(A_k; \alpha, \beta)}$, and these hyperparameters affect the mixing time of the Gibbs sampler.
Intuitively, if there is little probability of low-amplitude sequences, the sampler is unlikely to create new sequences and is therefore slow to explore different partitions of spikes.\footnote{This problem is common to other nonparametric Bayesian mixture models as well \cite[e.g.]{Miller2018}.}
If, on the other hand, the variance of~$\mathrm{Ga}(\alpha, \beta)$ is large relative to the mean, then the probability of forming new clusters is non-negligible and the sampler tends to mix more effectively.
Unfortunately, this latter regime is also probably of lesser scientific interest, since neural sequences are typically large in amplitude---they can involve many thousands of cells, each potentially contributing a small number of spikes \cite{Buzsaki2018,Hahnloser2002}.

To address this issue, we propose an annealing procedure to initialize the Markov chain.
We fix the mean of~$A_k$ and adjust~$\alpha$ and~$\beta$ to slowly lower variance of amplitude distribution.
Initially, the sampler produces many small clusters of spikes, and as we lower the variance of~$\mathrm{Ga}(\alpha, \beta)$ to its target value, the Markov chain typically combines these clusters into larger sequences.
We further improve performance by interspersing ``split-merge'' Metropolis-Hastings updates~\cite{Jain2004,Jain2007} between Gibbs sweeps (see Supplement D.6).
Finally, though we have not found it necessary, one could use convNMF to initialize the MCMC algorithm.

\textbf{Parallel MCMC}\quad
Resampling the sequence assignments is the primary computational bottleneck for the Gibbs sampler.
One pass over the data requires~$O(S K R)$ operations, which quickly becomes costly when the operations are serially executed. 
While this computational cost is manageable for many datasets, we can improve performance substantially by parallelizing the computation~\cite{Angelino2016}.
Given~$P$ processors and a spike train lasting~$T$ seconds, we divide the dataset into intervals lasting~$T / P$ seconds, and allocate one interval per processor.
The current global parameters,~$\Theta$, are first broadcast to all processors.
In parallel, the processors update the sequence assignments for their assigned spikes, and then send back sufficient statistics describing each sequence.
After these sufficient statistics are collected on a single processor, the global parameters are re-sampled and then broadcast back to the processors to initiate another iteration.
This algorithm introduces some error since clusters are not shared across processors.
In essence, this introduces erroneous edge effects if a sequence of spikes is split across two processors.
However, these errors are negligible when the sequence length is much less than~$T / P$, which we expect is the practical regime of interest.

\section{Experiments}

\subsection{Cross-Validation and Demonstration of Computational Efficiency}

\begin{figure}
\centering
\includegraphics[width=\linewidth]{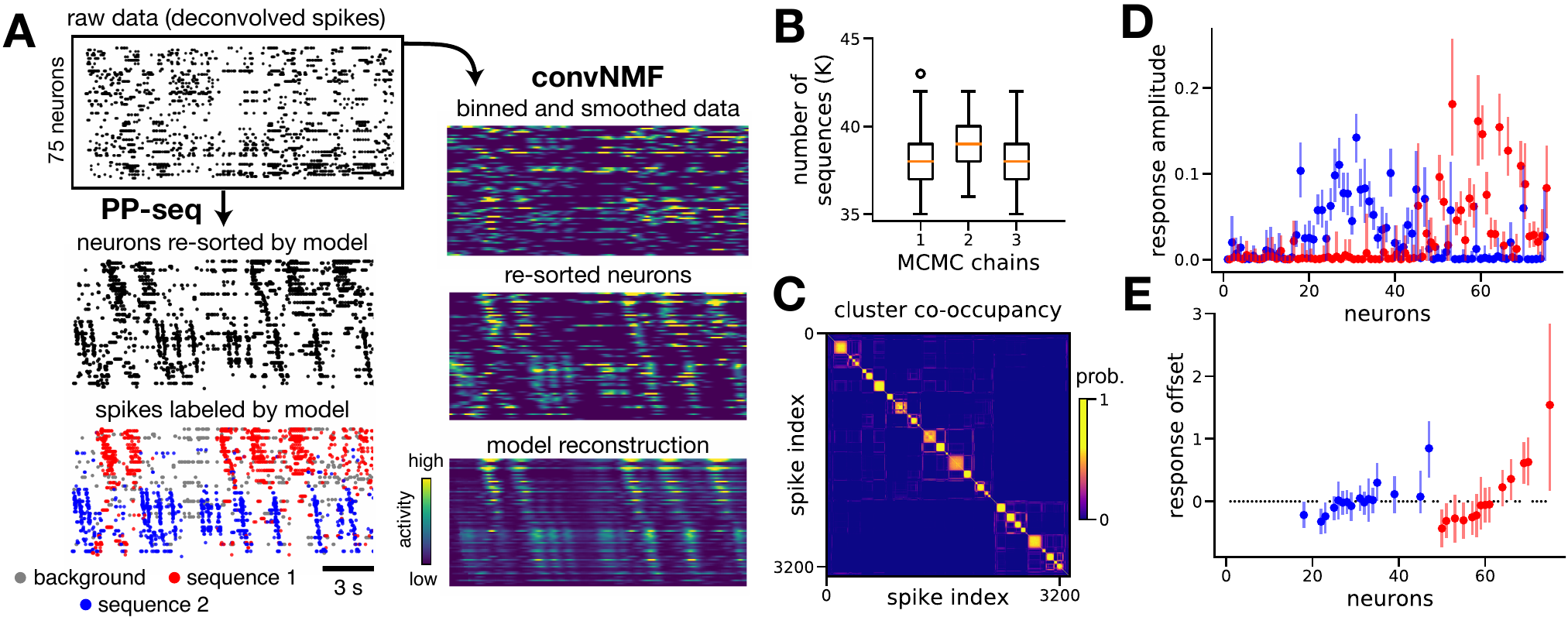}
\caption{
Zebra Finch HVC data.
\textit{(A)} Raw spike train (top) and sequences revealed by PP-Seq (left) and convNMF (right).
\textit{(B)} Box plots summarizing samples from the posterior on number of sequences, $K$, derived from three independent MCMC chains.
\textit{(C)} Co-occupancy matrix summarizing probabilities of spike pairs belonging to the same sequence.
\textit{(D)} Credible intervals for evoked amplitudes for sequence type 1 (red) and 2 (blue).
\textit{(E)} Credible intervals for response offsets (same order and coloring as \textit{D}). Estimates are suppressed for small-amplitude responses (gray dots).
}
\label{fig:bird_fig}
\end{figure}

We evaluate model performance by computing the log-likelihood assigned to held-out data.
Partitioning the data into training and testing sets must be done somewhat carefully---we cannot withhold time intervals completely (as in \cref{fig:crossval}A, \textit{top}) or else the model will not accurately predict latent sequences occurring in these intervals; likewise, we cannot withhold individual neurons completely (as in \cref{fig:crossval}A, \textit{middle}) or else the model will not accurately predict the response parameters of those held out cells.
Thus, we adopt a ``speckled'' holdout strategy \cite{Wold1978} as diagrammed at the bottom of \cref{fig:crossval}A.
We treat held-out spikes as missing data and sample them as part of the MCMC algorithm.  (Their conditional distribution is given by the PP-Seq generative model.)
This approach involving a speckled holdout pattern and multiple imputation of missing data may be viewed as a continuous time extension of the methods proposed by \textcite{Mackevicius2018} for convNMF.

Panels B-E in \cref{fig:crossval} show the results of this cross-validation scheme on a synthetic dataset with~$R=2$ sequence types.
The predictions of the model in held-out test regions closely match the ground truth---missing spikes are reliably imputed when they are part of a sequence (\cref{fig:crossval}C).
Further, the likelihood of the train and test sets improves over the course of MCMC sampling (\cref{fig:crossval}D), and can be used as a metric for model comparison---in agreement with the ground truth, test performance plateaus for models containing greater than~$R=2$ sequence types (\cref{fig:crossval}E).

Finally, to be of practical utility, the algorithm needs to run in a reasonable amount of time.  
\Cref{fig:crossval}G shows that our Julia \cite{Bezanson2017} implementation can fit a recording of 120 hippocamapal neurons with hundreds of thousands of spikes in a matter of minutes, on a 2017 MacBook Pro (3.1 GHz Intel Core i7, 4 cores, 16 GB RAM). 
Run-time grows linearly with the number of spikes, as expected, but even with a single thread it only takes six minutes to perform 1000 Gibbs sweeps on a 15-minute recording with~${\sim\!1.3 \times 10^5}$ spikes.  
With parallel MCMC, this laptop performs the same number of sweeps in under two minutes.
Our open-source implementation is available at:
\begin{center}
\url{https://github.com/lindermanlab/PPSeq.jl}.
\end{center}

\subsection{Zebra Finch Higher Vocal Center (HVC)}
\label{subsec:bird}

We first applied PP-Seq to a recording of HVC premotor neurons in a zebra finch,\footnote{These data are available at \url{http://github.com/FeeLab/seqNMF}; originally published in~\cite{Mackevicius2019}.} which generate sequences that are time-locked to syllables in the bird's courtship song.
\Cref{fig:bird_fig}A qualitatively compares the performance of convNMF and PP-Seq.
The raw data (top panel) shows no visible spike patterns; however, clear sequences are revealed by sorting the neurons lexographically by preferred sequence type and the temporal offset parameter inferred by PP-Seq.
While both models extract similar sequences, PP-Seq provides a finer scale annotation of the final result, providing, for example, attributions at the level of individual spikes to sequences (bottom left of \cref{fig:bird_fig}A).

Further, PP-Seq can quantify uncertainty in key parameters by considering the full sequence of MCMC samples.
\Cref{fig:bird_fig}B summarizes uncertainty in the total number of sequence events, i.e.~$K$, over three independent MCMC chains with different random seeds---all chains converge to similar estimates; the uncertainty is largely due to the rapid sequences (in blue) shown in panel A.
\Cref{fig:bird_fig}C displays a symmetric matrix where element~$(i,j)$ corresponds to the probability that spike~$i$ and spike~$j$ are attributed to same sequence.
Finally, \cref{fig:bird_fig}D-E shows the amplitude and offset for each neuron's sequence-evoked response with 95\% posterior credible intervals.
These results naturally fall out of the probabilistic construction of the PP-Seq model, but have no obvious analogue in convNMF.

\subsection{Time Warping Extension and Robustness to Noise}

\begin{figure}
\centering
\includegraphics[width=\linewidth]{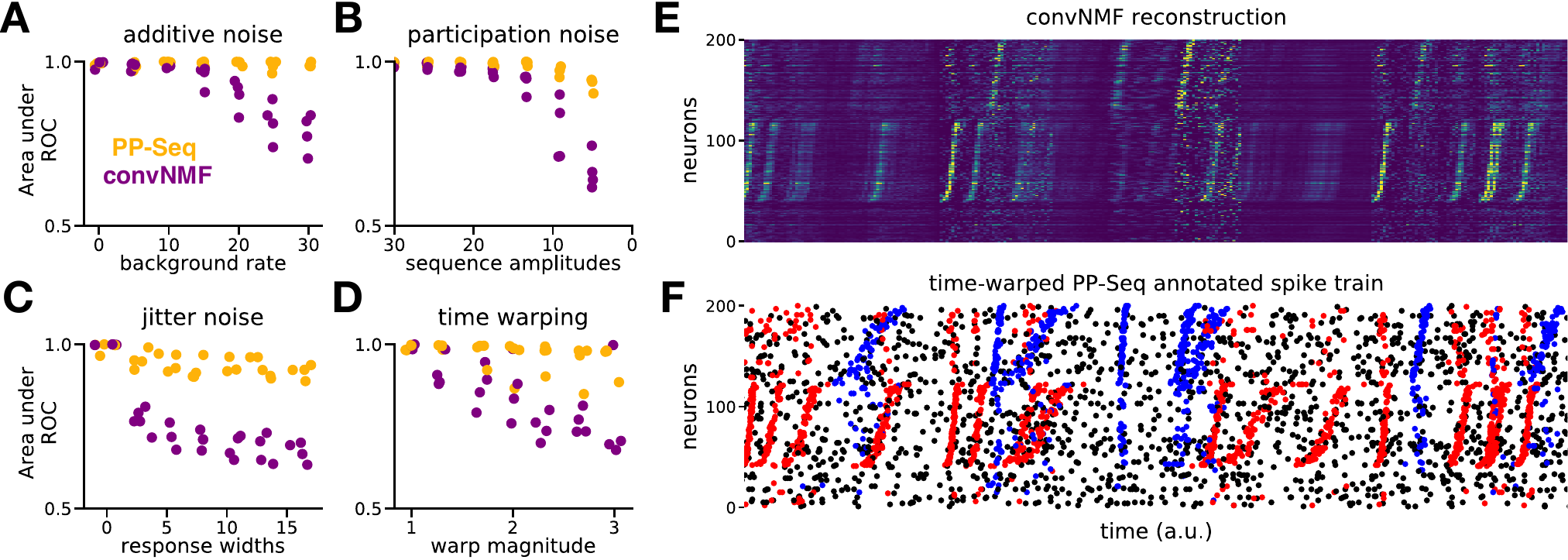}
\caption{
PP-Seq is more robust to various forms of noise than convNMF.
\textit{(A-D)} Comparison of PP-Seq and convNMF to detect sequence times in synthetic data with varying amounts of noise.
Panel D shows the performance of the time-warped variant of PP-Seq (see \cref{eq:warped-impulse-response}).
\textit{(E)} convNMF reconstruction of a spike train containing sequences with 9-fold time warping.
\textit{(F)} Performance of time-warped PP-Seq (see \cref{eq:warped-impulse-response}) on the same data as panel E.
}
\label{fig:noise_tolerance}
\end{figure}

Songbird HVC is a specialized circuit that generates unusually clean and easy-to-detect sequences.
To compare the robustness of PP-Seq and convNMF under more challenging circumstances, we created a simple synthetic dataset with~$R=1$ sequence type and $N=100$ neurons.
We varied four parameters to manipulate the difficulty of sequence extraction: the rate of background spikes, $\lambda^\varnothing_n$ (``additive noise,'' \cref{fig:noise_tolerance}A), the expected value of sequence amplitudes, $A_k$ (``participation noise'', \cref{fig:noise_tolerance}B), the expected variance of the Gaussian impulse responses, $c_{nr}$ (``jitter noise'', \cref{fig:noise_tolerance}C), and, finally, the maximal time warping coefficient (see \cref{eq:warped-impulse-response}; \cref{fig:noise_tolerance}D).
All simulated datasets involved sequences with low spike counts ($\mathbb{E}[A_k]<100$ spikes).
In this regime, the Poisson likelihood criterion used by PP-Seq is better matched to the statistics of the spike train.
Since convNMF optimizes an alternative loss function (squared error instead of Poisson likelihood) we compared the models by their ability to extract the ground truth sequence event times.
Using area under reciever operating characteriztic (ROC) curves as a performance metric (see Supplement F.1), we see favorable results for PP-Seq as noise levels are increased.

We demonstrate the abilities of time-warped PP-Seq further in \cref{fig:noise_tolerance}E-F.
Here, we show a synthetic dataset containing sequences with 9-fold variability in their duration, which is similar to levels observed in some experimental systems \cite{Davidson2009}.
While convNMF fails to reconstruct many of these warped sequences, the PP-Seq model identifies sequences that are closely matched to ground truth.

\subsection{Rodent Hippocampal Sequences}
\label{subsec:hippocampus-results}

Finally, we tested PP-Seq and its time warping variant on a hippocampal recording in a rat making repeated runs down a linear track.\footnote{These data are available at \url{http://crcns.org/data-sets/hc/hc-11}; originally published in \cite{Grosmark2016}.}
This dataset is larger ($T\approx16$ minutes, $S=\,$137,482) and contains less stereotyped sequences than the songbird data.
From prior work \cite{Grosmark2016}, we expect to see two sequences with overlapping populations of neurons, corresponding to the two running directions on the track.
PP-Seq reveals these expected sequences in an unsupervised manner---i.e. without reference to the rat's position---as shown in \cref{fig:hippocampus}A-C.

We performed a large cross-validation sweep over 2,000 random hyperparameter settings for this dataset (see Supplement F.2).
This confirmed that models with $R=2$ sequence performed well in terms of heldout performance (\cref{fig:hippocampus}D).
Interestingly, despite variability in running speeds, this same analysis did not show a consistent benefit to including larger time warping factors into the model (\cref{fig:hippocampus}E).
Higher performing models were characterized by larger sequence amplitudes, i.e. larger values of~$\mathbb{E}[A_k] = \alpha / \beta$, and smaller background rates, i.e. smaller values of~$\lambda^\varnothing$.
Other parameters had less pronounced effects on performance.
Overall, these results demonstrate that PP-Seq can be fruitfully applied to large-scale and ``messy'' neural datasets, that hyperparameters can be tuned by cross-validation, and that the unsupervised learning of neural sequences conforms to existing scientific understanding gained via supervised methods.

\begin{figure}
\centering
\includegraphics[width=\linewidth]{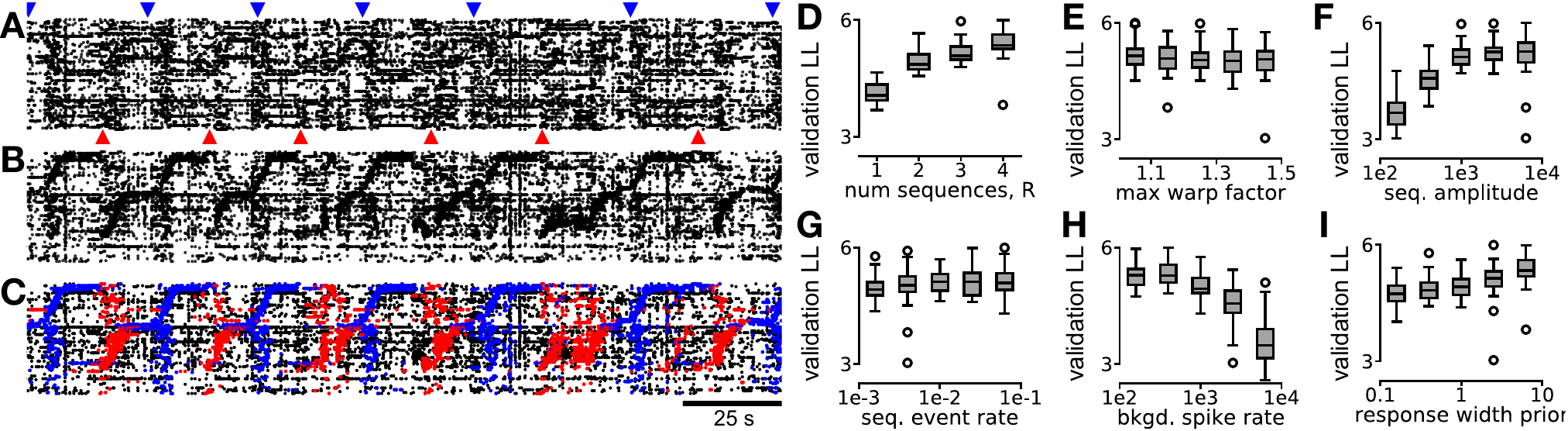}
\caption{
Automatic detection of place coding sequences in rat hippocampus.
\textit{(A)} Raw spike train ($\sim$20\% of the full dataset is shown). Blue and red arrowheads indicate when the mouse reaches the end of the track and reverses direction.
\textit{(B)} Neurons re-sorted by PP-Seq (with $R=2$).
\textit{(C)} Sequences annotated by PP-Seq.
\textit{(D-I)} Validation log-likelihoods as a function of hyperparameter value. Each boxplot summarizes the 50 highest scoring models, randomly sampling the other hyperparameters.
}
\label{fig:hippocampus}
\end{figure}

\section{Conclusion}

We proposed a point process model (PP-Seq) inspired by convolutive NMF \cite{Smaragdis2006,Peter2017,Mackevicius2019} to identify neural sequences.
Both models approximate neural activity as the sum of nonnegative features, drawn from a fixed number of spatiotemporal motif types.
Unlike convNMF, PP-Seq restricts each motif to be a true sequence---the impulse responses are Gaussian and hence unimodal.
Further, PP-Seq is formulated in a probabilistic framework that better quantifies uncertainty (see \cref{fig:bird_fig}) and handles low firing rate regimes (see \cref{fig:noise_tolerance}). 
Finally, PP-Seq produces an extremely sparse representation of sequence events in continuous time, opening the door to a variety of model extensions including the introduction of time warping (see \cref{fig:noise_tolerance}F), as well as other possibilities like truncated sequences and ``clusterless'' observations~\cite{deng2015clusterless}, which could be explored in future work.

Despite these benefits, fitting PP-Seq involves a tackling a challenging trans-dimensional inference problem inherent to Neyman-Scott point processes.
We took several important steps towards overcoming this challenge by connecting these models to a more general class of Bayesian mixture models~\cite{Miller2018}, developing and parallelizing a collapsed Gibbs sampler, and devising an annealed sampling approach to promote fast mixing.
These innovations are sufficient to fit PP-Seq on datasets containing hundreds of thousands of spikes in just a few minutes on a modern laptop.

\subsection*{Acknowledgements}

A.H.W. received funding support from the National Institutes of Health BRAIN initiative (1F32MH122998-01), and the Wu Tsai Stanford Neurosciences Institute Interdisciplinary Scholar Program.
S.W.L. was supported by grants from the Simons Collaboration on the Global Brain (SCGB 697092) and the NIH BRAIN Initiative (U19NS113201 and R01NS113119).
We thank the Stanford Research Computing Center for providing computational resources and support that contributed to these research results.

\subsection*{Broader Impact}

Understanding neural computations in biological systems and ultimately the human brain is a grand and long-term challenge with broad implications for human health and society.
The field of neuroscience is still taking early and incremental steps towards this goal.
Our work develops a general-purpose, unsupervised method for identifying an important structure---neural sequences---which have been observed in a variety of experimental datasets and have been studied extensively by theorists.
This work will serve to advance this growing understanding by providing new analytical tools for neuroscientists. We foresee no immediate impacts, positive or negative, concerning the general public.

\printbibliography
\clearpage
\end{refsection}
\begin{refsection}

\begin{changemargin}{-.5in}{-.5in}

\begin{center}
{\Large Supplementary Material: Point process models for sequence detection in high-dimensional spike trains}
\end{center}
\vspace{2ex}

\begin{appendices}

\section{Background}
\label{sec:background}

This section describes some basic properties and characteristics of Poisson processes that are relevant to understand the PP-Seq model. See \textcite{Kingman2002} for a targeted introduction to Poisson processes and \textcite{Daley2003} for a broader introduction to point processes.

\textbf{Definition of a Poisson Process --- }
A point process model satisfying the following three properties is called a \textit{Poisson process}.
\begin{enumerate}[(i)]
    \item The number of events falling within any region of the sample space $\mcX$ is a random variable following a Poisson distribution. For a region $R \subset \mcX$ we denote the number of events occuring in $R$ as $N(R)$.
    \item There exists a function~$\lambda(x): \mathcal{X} \mapsto \R_+$, called the \textit{intensity function}, which satisfies $N(R) \sim \text{Poisson}(\int_R \lambda(x) dx)$ for every region $R \subset \mcX$.
    \item For any two non-overlapping regions $R$ and $R^\prime$, $N(R)$ and $N(R^\prime)$ are independent random variables.
\end{enumerate}

We write $\{ x_i \}_{i=1}^n \sim PP(\lambda)$ to denote that a set of events $\{ x_i \}_{i=1}^n = \{ x_1, \hdots, x_n \}$ is distributed according to a Poisson process with intensity function $\lambda$.

\textbf{Marked Poisson Processes --- }
A \textit{marked} Poisson process extends the above definition to sets of events marked with additional information. For example, we write $\{ (x_i, m_i) \}_{i=1}^n \sim PP(\lambda(x, m))$ to denote a marked Poisson process where $m_i \in \mcM$ is the ``mark'' associated with each event.
Note that the domain of the intensity function is extended to $\mcX \times \mcM$.
To pick a concrete example of interest, the events could represent spike times, in which case $\mcX$ would be an interval of the real line, e.g. $[0, T]$, representing the recorded period.
Further, $\mcM$ would be a discrete set of neuron labels, e.g. $\{1, \hdots, N\}$ for a recording of $N$ cells.

\textbf{Likelihood function ---}
Given a set of events $\{ x_i \}_{i=1}^n$ in $\mcX$ the likelihood of this set, with respect to Poisson process $PP(\lambda)$ is:
\begin{equation}
\label{eq:poiss-proc-likelihood}
p( \{ x_i \}_{i=1}^n \mid \lambda ) = \prod_{i=1}^n \lambda(x_i) \cdot \exp \left \{ - \int_\mcX \lambda(x) dx \right \}
\end{equation}

\textbf{Sampling ---}
Given an intensity function $\lambda(x)$ and a sample space $\mcX$, we can simulate data from a Poisson process by first sampling the number of events:
\begin{equation}
N \sim \text{Poisson} \left ( \int_\mcX \lambda(x) dx \right )
\end{equation}
and then sampling the location of each event according to the normalized intensity function.
That is, for $i = \{1, \hdots, N\}$, we independently sample the events according to:
\begin{equation}
x_i \sim F \quad \text{where the p.d.f. of $F$ is } \frac{\lambda(x)}{\int_\mcX \lambda(x) dx}
\end{equation}

\textbf{Superposition Principle --- }The union of $K$ independent Poisson processes with nonnegative intensity functions $\lambda_1(x), \hdots, \lambda_K(x)$ is a Poisson process with intensity function $\Lambda(x) = \sum_{k=1}^K \lambda_k(x)$.

As a consequence, we can sample from a complicated Poisson process by first breaking it down into a series of simpler Poisson processes, drawing independent samples from these simpler processes, and taking the union of all events to achieve a sample from the original Poisson process.
We exploit this property in \cref{sec:generative-process} to draw samples from the PP-Seq model.

\section{The PP-Seq Model}

\subsection{Generative Process}
\label{sec:generative-process}

Recall the generative model described in the main text.  Let~$z_k = (\tau_k, r_k, A_k)$ denote the~$k$-th latent sequence, which consists of a time~$\tau_k$, type~$r_k$, and amplitude~$A_k$.  Extensions of the model introduce further properties like time warping factors~$\omega_k$ for each latent sequence.  The latent events (i.e. instantiations of a sequence) are sampled from a Poisson process,
\begin{align}
    \{(\tau_k, r_k, A_k)\}_{k=1}^K &\sim \mathrm{PP}\big(\lambda(\tau, r, A) \big) \\
    \lambda(\tau, r, A) &= \psi \, \pi_r \, \mathrm{Ga}(A; \alpha, \beta).
\end{align}
Note that this is a \textit{homogeneous} Poisson process since the intensity function does not depend on time.
Conditioned on the latent events, the observed spikes are then sampled from a second, inhomogeneous, Poisson process,
\begin{align}
\{(t_s, n_s)\}_{s=1}^S &\sim \mathrm{PP}\left(\lambda(t, n \mid \{(\tau_k, r_k, A_k)\}_{k=1}^K) \right) \\
\lambda(t, n \mid \{(\tau_k, r_k, A_k)\}_{k=1}^K) 
&= \lambda^\varnothing(t, n) + \sum_{k=1}^K g(t, n \mid \tau_k, r_k, A_k) \\
g(t, n \mid \tau_k, r_k, A_k) &= A_k \cdot a_{n r_k} \cdot \mcN(t \mid \tau_k + b_{n r_k}, c_{n  r_k}).
\end{align}
In the main text we abbreviate~$\lambda_n(t) \triangleq \lambda(t, n \mid \{(\tau_k, r_k, A_k)\}_{k=1}^K)$, $\lambda^\varnothing(t, n) \triangleq \lambda^\varnothing_n$, and $g_n(t \mid z_k) \triangleq g(t, n \mid \tau_k, r_k, A_k)$. 

Observe that the firing rates are sums of non-negative intensity functions.  We can use the superposition principle of Poisson processes (see \cref{sec:background}) to write the observed spikes as the union of spikes generated by the background and by each of the latent events.
\begin{align}
    \{(t_s^{(0)}, n_s^{(0)}\}_{s=1}^{S_0} &\sim \mathrm{PP}\big( \lambda^\varnothing(t, n) \big) \\
    \{(t_s^{(k)}, n_s^{(k)}\}_{s=1}^{S_k} &\sim \mathrm{PP}\big( A_k \cdot a_{n r_k} \cdot \mcN(t \mid \tau_k + b_{n r_k}, c_{n  r_k}) \big) & \text{for } &k=1, \ldots, K\\
    \{(t_s, n_s)\}_{s=1}^S &= \bigcup_{k=0}^K \{(t_s^{(k)}, n_s^{(k)}\}_{s=1}^{S_k},
\end{align}
where~$S_k$ denotes the number of spikes generated by the~$k$-th component, with~$k=0$ denoting the background spikes.

Next, note that each of the constituent Poisson processes intensity functions are simple, in that the normalized intensities correspond to simple densities.  In the case of the background, the normalized intensity is categorical in the neuron indices and uniform in time.  The impulse responses, once normalized, are categorical in the neuron indices and conditionally Gaussian in time.  We use these facts to sample the Poisson processes as described in \Cref{sec:background}. 

The final sampling procedure for PP-Seq is as follows.  First sample the global parameters,
\begin{align}
\lambda^\varnothing &\sim \mathrm{Gamma}(\alpha_\varnothing, \beta_\varnothing)
\label{eq:ppseq-lambda0}
\\
\pi^\varnothing &\sim \text{Dirichlet}(\gamma_\varnothing \cdot \boldsymbol{1}_N)
\label{eq:ppseq-pi0}
\\
\pi &\sim \text{Dirichlet}(\gamma \cdot \boldsymbol{1}_R)
\label{eq:ppseq-pi}
\\
\ba_{r} &\sim \text{Dirichlet}(\varphi \cdot \boldsymbol{1}_N)
&& \text{for $r \in \{1, \hdots, R\}$}
\label{eq:ppseq-ar}
\\
c_{nr} &\sim \text{Inv-}\chi^2(\nu, \sigma^2)
&& \text{for $(n,r) \in \{1, \hdots, N\} \times \{1, \hdots, R\}$}
\label{eq:ppseq-cnr}
\\
b_{nr} &\sim \text{Normal}(0, c_{nr} / \kappa)
&& \text{for $(n,r) \in \{1, \hdots, N\} \times \{1, \hdots, R\}$}.
\label{eq:ppseq-bnr}
\end{align}
Then sample the latent events,
\begin{align}
K &\sim \text{Poisson}(\psi \cdot T)
\label{eq:ppseq-K}
\\
r_{k} &\sim \text{Categorical}( \pi )
&& \text{for $k \in \{1, \hdots, K\}$}
\label{eq:ppseq-rk}
\\
\tau_{k} &\sim \text{Uniform}( \, [0, T] \, )
&& \text{for $k \in \{1, \hdots, K\}$}
\label{eq:ppseq-tauk}
\\
A_{k} &\sim \text{Gamma}( \alpha, \beta )
&& \text{for $k \in \{1, \hdots, K\}$}.
\label{eq:ppseq-Ak}
\end{align}
Next sample the background spikes,
\begin{align}
S_{\varnothing} &\sim \text{Poisson}\big(\lambda^\varnothing \cdot T \big)
\label{eq:ppseq-Snot}
\\
n_{0}^{(s)} &\sim \text{Categorical}(\pi^\varnothing)
&& \text{for $s \in \{1, \hdots, S_\varnothing \}$}
\label{eq:ppseq-ns0}
\\
t_{0}^{(s)} &\sim \text{Uniform}( \, [0, T] \, )
&& \text{for $s \in \{1, \hdots, S_\varnothing \}$}
\label{eq:ppseq-ts0}
\end{align}
Note that we have decomposed each neuron's background firing rate as the product: $\lambda^\varnothing_n = \lambda^\varnothing \pi^\varnothing_n$.
Finally, sample the induced spikes in each sequence,
\begin{align}
S_{k} &\sim \text{Poisson}(A_k)
&& \text{for $k \in \{1, \hdots, K\}$}
\label{eq:ppseq-Sk}
\\
n_{k}^{(s)} &\sim \text{Categorical}(\ba_{r_k})
&& \text{for $s \in \{1, \hdots, S_k \}$; for $k \in \{1, \hdots, K \}$}
\label{eq:ppseq-nsk}
\\
t_{k}^{(s)} &\sim \text{Normal}(\tau_k + b_{n_s r_k}, c_{n_s r_k})
&& \text{for $s \in \{1, \hdots, S_k \}$; for $k \in \{1, \hdots, K \}$}
\label{eq:ppseq-tsk}
\end{align}
The hyperparameters of the model are~$\xi = \big \{ \gamma, \gamma_\varnothing, \alpha_\varnothing, \beta_\varnothing, \varphi, \nu, \sigma^2, \kappa, \psi, \alpha, \beta \big \}$, and the global parameters consist of~$\Theta = \{\theta_\varnothing, \{\theta_r\}_{r=1}^R\}$, where~$\theta_\varnothing = \{\lambda^\varnothing, \{ \pi_n^\varnothing \}_{n=1}^N \}$ denotes parameters related to the background spiking process and $\theta_r = \{\pi_r, \mathbf{a}_r, \{b_{nr}, c_{nr}\}_{n=1}^N\}$ denotes parameters associated with each sequence type for $r \in \{1, \hdots, R\}$. 
The resulting dataset is the union of background and induced spikes,~$\bigcup_{k=0}^K \bigcup_{s=1}^{S_k} \big \{ \big (n_k^{(s)}, \, t_k^{(s)} \big ) \big \}$. 

\subsection{Expected Sequence Sizes}
As described in the main text, each sequence type induces a different pattern of activation across the neural population.
In particular, each neuron's response is modeled by a scaled normal distribution described by three variables---the amplitude ($a_{nr}$), the mean ($b_{nr}$), and the variance ($c_{nr}$).

We can interpret $b_{nr}$ as the temporal offset or delay for neuron $n$, while $c_{nr}$ sets the width of the response. 
They are drawn from a \textit{normal-inverse-chi-squared} distribution, which is a standard choice for the conjugate prior when inferring the mean and variance of a univariate normal distribution. Refer to \cite{Gelman2013} (section 3.3) and \cite{Murphy2012} (section 4.6.3.7) for appropriate background material.  

We can interpret $a_{nr}$ as the expected number of spikes emitted by neuron $n$ for a sequence of type $r$ with unit amplitude (i.e., when $A_k = 1$).
By sampling $\ba_r \in \R_+^N$ from a symmetric Dirichlet prior, we introduce the constraint that $\sum_{n} a_{nr} = 1$, which avoids a degeneracy in the amplitude parameters over neurons ($a_{nr}$) and sequences ($A_k$).
As a consequence, we can interpret $A_k$ as the expected number of spikes evoked by latent event $k$ over the full neural population:
\begin{equation}
\mathbb{E}[S_k] = \sum_{n=1}^N \int_0^T A_k \cdot a_{nr} \cdot \mcN(t \mid \tau_k + b_{n r_k}, c_{n r_k}) \, \mathrm{d}t
= A_k \cdot \underbrace{\small{\sum}_{n=1}^N a_{nr}}_{=1} \cdot  \underbrace{\int_0^T \mcN(t \mid \tau_k + b_{n r_k}, c_{n r_k}) \, \mathrm{d}t}_{\approx 1} \approx A_k.
\end{equation}
This approximation ignores boundary effects, but it is justified when $0 \ll \tau_k \ll T$ and $b_{nr}, \sqrt{c_{nr}} \ll T$; i.e. when the sequences are short relative to the interval~$[0, T]$.  This condition will generally be satisfied.

\subsection{Log Likelihood}
\label{sec:ppseq-likelihood}

Given a set of latent events $\{z_k\} = \{z_k\}_{k=1}^K$ and sequence type parameters $\{\theta_r \} = \{\theta_r \}_{r=1}^R$, we can evaluate the likelihood of the dataset $\{x_s\} = \{x_s \}_{s=1}^S$ as:
\begin{equation}
\label{eq:ppseq-log-likelihood}
\log p ( \{ x_s \} \mid \{ z_k \}, \{ \theta_r \} )  = \underbrace{\sum_{s=1}^S \log \left [ \lambda^\varnothing \pi^\varnothing_{n_s} + \sum_{k=1}^K A_k a_{n_s r_k} \mcN(t_s \mid \tau_k + b_{n_s r_k}, c_{n_s r_k}) \right ]}_{(*)} - \underbrace{\lambda^\varnothing T - \sum_{k=1}^K A_k}_{(**)}
\end{equation}
where terms $(*)$ and $(**)$ respectively correspond to the logarithms of $[\, \prod_i \lambda(x_i) \, ]$ and $[\, \exp [ - \int_R \lambda(x) \mathrm{d}x ] \, ]$ appearing in \Cref{eq:poiss-proc-likelihood}.

\section{Complete Gibbs Sampling Algorithm}
\label{sec:gibbs}

In this section we describe the collapsed Gibbs sampling routine for PP-Seq in detail. It adapts ``Algorithm 3'' of \textcite{Neal2000} and the extension by \textcite{Miller2018} to draw approximate samples from the model posterior.
To do this, we introduce auxiliary parent variables $u_s \in \{0\} \cup \mathbb{Z}_+$ for every spike.
These variables denote assignments of each spike to of the latent sequence events, $u_s = k$ for some $k \in \{0 \hdots K\}$, or to the background process, $u_s = 0$.
Let~$X_k = \{x_s: u_s=k\}$ denote the set of spikes assigned to sequence~$k \in \{1, \ldots, K\}$ under parent assignments~$\{u_s\}_{s=1}^S$.
Similar assignment variables are used in standard Gibbs sampling routines for finite mixture models (see, e.g., section 24.2.4 in \cite{Murphy2012}) and for Dirichlet Process Mixture (DPM) models (see \cite{Neal2000}; section 25.2.4 in \cite{Murphy2012}).
In these contexts, each $u_s$ may be thought of as a sequence assignment or indicator variable.
In the present context of the PP-Seq model, the attributions of spikes to latent events follows from the superposition principle of Poisson processes (see \cref{sec:background}), which allows us to decompose each neuron's firing rate function into a sum of simple functions (in this case, univariate Gaussians) and thus derive efficient Gibbs sampler updates.

\begin{algorithm}

\vspace{.5em}

\KwIn{Spikes $\{x_1, \ldots, x_S\}$ and hyperparameters $\xi$.}

\vspace{0.5em}

Initialize by assigning all spikes to the background: $u_s = 0$ for $s \in \{1 \hdots S\}$.

\vspace{0.5em}

\SetKwBlock{Loop}{repeat}{end}

\Loop($M$ times to draw $M$ samples){
    
  1. Resample parent assignments, integrating over latent events:
  
  \vspace{.6em}
  
  \hspace{1.5em} \textbf{for} $s = 1,\, \hdots \,, S$. Remove $x_s$ its current assignment and place it...
  
  \vspace{.8em}
  
  \hspace{2.5em} a. in the background, $u_s = 0$, with probability $\propto (1 + \beta) \, \lambda^\varnothing_{n_s}$

  \vspace{.9em}

  \hspace{2.5em} b. in cluster $u_s = k$, with probability $\propto (\alpha + |X_k| ) \, \left[\sum_{r_k=1}^R p(r_k \mid X_k) \, a_{n_s r_k} \, 
p(t_s \mid X_k, r_k, n_s) \right]$

  \vspace{.8em}
  
  \hspace{2.5em} c. in a new cluster, $u_s = K + 1$, with probability $\propto \alpha \left ( \frac{\beta}{1 + \beta} \right )^\alpha  \psi \sum_{r=1}^R \pi_r \, a_{n_s r}$
  
  \vspace{.2em}

  \hspace{1.5em} \textbf{end}
  
  \vspace{.6em}

  2. Resample latent events 
  \vspace{.6em}
  
  \hspace{1.5em} \textbf{for} $k = 1,\, \hdots \,, K$ sample $r_k, \tau_k, A_k \sim p(r_k, \tau_k, A_k \mid X_k, \Theta)$ with the following steps:
  
  \vspace{.8em}
  
  \hspace{2.5em} a. Sample~$r_k \sim p(r_k \mid X_k, \Theta, \xi)$ where
  \begin{align*}
      p(r_k = r \mid X_k, \Theta, \xi) &\propto 
      \pi_{r_k} \left( \prod_{x_s \in X_k} a_{n_s r_k} \right) \frac{Z({\textstyle \sum_{x_s \in X_k}} J_{sk}, {\textstyle \sum_{x_s \in X_k}} h_{sk})}{\prod_{x_s \in X_k}Z(J_{sk}, h_{sk})}
  \end{align*}
  \hspace{3em} and where $J_{sk} = 1/c_{n_s r_k}$,~$h_{sk} = (t_s - b_{n_s r_k})/c_{n_s r_k}$, and $Z(J, h) = (2 \pi)^{1/2} J^{-1/2} \exp\left \{\tfrac{1}{2} h^2 J^{-1} \right\}$

  \vspace{.6em}

  \hspace{2.5em} b. Sample~$\tau_k \sim p(\tau_k \mid r_k, X_k, \Theta)$ where,
  \begin{align*}
      p(\tau_k \mid r_k, X_k, \Theta) &= \mcN(\tau_k \mid \mu_k, \sigma_k^2)
  \end{align*}
  \hspace{3em} and where $\sigma_k^2 = (\sum_{x_s \in X_k} J_{sk})^{-1}$ and $\mu_k = \sigma_k^2 (\sum_{x_s \in X_k} h_{sk})$

  \vspace{.6em}
  
  \hspace{2.5em} c. Sample~$A_k \sim \mathrm{Ga}(\alpha + |X_k|, \beta + 1)$.

  \vspace{.2em}

  \hspace{1.5em} \textbf{end}
  
  \vspace{.6em}
  
  3. Resample global parameters

  \vspace{.6em}
  
  \hspace{1.5em} a. Sample~$\lambda_n^\varnothing \sim \mathrm{Ga}(\alpha_\varnothing + \sum_{x_s \in X_0} \mathbb{I}[n_s = n], \, \beta_\varnothing + T)$ for $n = 1, \ldots, N$.
  
  \vspace{.6em}
  
  \hspace{1.5em} b. Sample $\pi \sim \mathrm{Dir}\left(\left[\gamma + \sum_k \mathbb{I}[r_k = 1], \ldots, \gamma + \sum_k \mathbb{I}[r_k = R] \right] \right)$
  
  \vspace{.6em}
  
  \hspace{1.5em} c. Sample $\mathbf{a}_r \sim \mathrm{Dir}\left( \left[
  \varphi + \sum_k \sum_{x_s \in X_k} \mathbb{I}[r_k = r \wedge n_s=1], \ldots, 
  \varphi + \sum_k \sum_{x_s \in X_k} \mathbb{I}[r_k = r \wedge n_s=N] \right] \right)$
  
  \vspace{.6em}
  \hspace{1.5em} d. Sample $b_{nr}, c_{nr} \sim \mathrm{Inv-}\chi^2(c_{nr}; \nu_{nr}, \sigma_{nr}^2) \, \mcN(b_{nr} \mid \mu_{nr}, c_{nr} / \kappa_{nr})$ for $n=1\ldots,N$, $r=1\ldots,R$
    where
    \begin{align*}
        S_{nr} &= \sum_{k=1}^{K^*} \sum_{x_s \in X_k} {\mathbb{I}[r_k=r \wedge n_s=n]} \\
        \nu_{nr} &= \nu + S_{nr} & 
        \sigma_{nr}^2 &= \frac{\nu \sigma^2 + \sum_{k=1}^{K^*} \sum_{x_s \in X_k} (t_s - \tau_k)^2 \cdot \mathbb{I}[r_k=r \wedge n_s=n]}{\nu + S_{nr}}, \\
        \kappa_{nr} &= \kappa + S_{nr} &
        \mu_{nr} &= \frac{\sum_{k=1}^{K^*} \sum_{x_s \in X_k} (t_s - \tau_k) \cdot \mathbb{I}[r_k=r \wedge n_s=n]}{\kappa + S_{nr}}.
    \end{align*}
    
  \vspace{.2em}
}
\caption{Collapsed Gibbs sampling for PPSeq model}
\label{alg:collapsedgibbs}
\end{algorithm}

The Gibbs sampling routine is summarized in Algorithm \ref{alg:collapsedgibbs}.
While the approach closely mirrors existing work by \textcite{Neal2000,Miller2018} there are a couple notable differences.
First, the background spikes in the PP-Seq model do not have an obvious counterpart in these prior algorithms, and this feature of the model renders the partition of datapoints only \textit{partially} exchangeable (intuitively, the set of datapoints defined by $u_s = 0$ is treated differently from all other groups).
Second, the factors $S_k + \alpha$ and $\alpha \, \left ( \frac{\beta}{1 + \beta} \right )^\alpha \psi$ in the collapsed Gibbs updates are specific to the special form of the PP-Seq model as a Neyman-Scott process; for example, in the analogous Gibbs sampling routine for DPMs, $(\alpha + |X_k|)$ is replaced with $|X_k|$, and $\alpha \, \psi \, T \left ( \frac{\beta}{1 + \beta} \right )^\alpha$ is replaced with the concentration parameter of the Dirichlet process.

\section{Derivations}

\subsection{Prior Distribution Over Partitions}
\label{subsec:partitions}

First we derive the prior distribution on the partition of spikes under a Neyman-Scott process.
Technically, we derive the prior distribution on spike \emph{indices},~$\{1, \ldots, S\}$, not yet including the spike times and neurons.  
A partition of indices is a set of disjoint, non-empty sets whose union is~$\{1, \ldots, S\}$.  
We represent the partition as~$\mcI = \{\mcI_k\}_{k=0}^{K^*}$, where~$\mcI_k \subset \{1, \ldots, S\}$ contains the indices of spikes assigned to sequence~$k$, with~$k=0$ denoting the background spikes.  
Here,~$K^*$ denotes the number of non-empty sequences in the partition, and~$|\mcI_k|$ denotes the number of spikes assigned to the~$k$-th sequence.  
Likewise,~$|\mcI_0|$ denotes the number of spikes assigned to the background.

\begin{proposition}
The prior probability of the partition, integrating over the latent event parameters, is,
\begin{align}
\label{eq:pepdist}
p(\mcI \mid \Theta, \xi) = V(K^*; \xi) \, \frac{\mathrm{Po}(|\mcI_0| ; \lambda^\varnothing T) \, |\mcI_0|!}{S! \, (1+\beta)^{S-|\mcI_0|}} 
\prod_{k=1}^{K^*} \frac{\Gamma(|\mcI_k|+\alpha)}{\Gamma(\alpha)},
\end{align}
where $\alpha$ and $\beta$ are the hyperparameters of the gamma intensity on amplitudes, ${ \lambda^\varnothing T}$ is the expected number of background events, and 
\begin{align}
\label{eq:Vncoef}
V(K^*; \xi) = \sum_{K=K^*}^\infty \mathrm{Po}(K; \psi T) \, \frac{K!}{(K-K^*)!} \, \left(\frac{\beta}{1+\beta}\right)^{\alpha K}.
\end{align}
\end{proposition}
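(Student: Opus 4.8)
The plan is to reduce the statement to a pure counting argument. Since $\mcI$ is a partition of spike \emph{indices} --- not of the spikes together with their times and neuron labels --- it is produced by only three ingredients: how many spikes the background emits, how many spikes each latent event emits, and a uniformly random labelling of the resulting $S$ spikes by $\{1,\dots,S\}$. Consequently every component of $\Theta$ other than the scalar background rate $\lambda^\varnothing$ governs only spike times and neuron identities, which $\mcI$ does not record; those parameters integrate out to $1$, so $p(\mcI\mid\Theta,\xi)$ depends on $\Theta$ only through $\lambda^\varnothing$ (together with $\psi,\alpha,\beta,T$), consistent with the claimed formula.

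First I would marginalize the amplitudes. By \cref{eq:ppseq-Sk}, the number of spikes induced by latent event $k$ is $\mathrm{Po}(A_k)$ given $A_k$, and $A_k\sim\mathrm{Ga}(\alpha,\beta)$; marginalizing yields a negative-binomial count with pmf $q(s)=\frac{\Gamma(\alpha+s)}{\Gamma(\alpha)\,s!}\big(\frac{\beta}{1+\beta}\big)^{\alpha}(1+\beta)^{-s}$, so that $q(0)=\big(\frac{\beta}{1+\beta}\big)^{\alpha}$ and $q(s)\,s!=\frac{\Gamma(\alpha+s)}{\Gamma(\alpha)}\big(\frac{\beta}{1+\beta}\big)^{\alpha}(1+\beta)^{-s}$, while the background emits $\mathrm{Po}(\lambda^\varnothing T)$ spikes. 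Next I would assemble, for a fixed number $K\ge K^*$ of latent events, the joint probability $p(\mcI,K)$ as the product of: $\mathrm{Po}(K;\psi T)$; a factor $K!/(K-K^*)!$ for choosing which $K^*$ of the exchangeable latent events are non-empty and matching them bijectively with the $K^*$ sequence blocks of $\mcI$; $q(0)^{K-K^*}$ for the empty latent events; $q(|\mcI_k|)$ for the $k$-th non-empty event; $\mathrm{Po}(|\mcI_0|;\lambda^\varnothing T)$ for the background; and $|\mcI_0|!\prod_{k=1}^{K^*}|\mcI_k|!/S!$ for the chance that a uniform labelling of the $S$ spikes realizes this particular set partition. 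Collecting terms --- the $|\mcI_k|!$ cancels the $1/|\mcI_k|!$ inside $q(|\mcI_k|)$, the $(\beta/(1+\beta))^{\alpha}$'s from the $K-K^*$ empty and the $K^*$ non-empty events combine into $(\beta/(1+\beta))^{\alpha K}$, and $\sum_{k\ge1}|\mcI_k|=S-|\mcI_0|$ --- gives
\[
p(\mcI,K)=\mathrm{Po}(K;\psi T)\,\frac{K!}{(K-K^*)!}\Big(\frac{\beta}{1+\beta}\Big)^{\alpha K}\cdot\frac{\mathrm{Po}(|\mcI_0|;\lambda^\varnothing T)\,|\mcI_0|!}{S!\,(1+\beta)^{S-|\mcI_0|}}\prod_{k=1}^{K^*}\frac{\Gamma(|\mcI_k|+\alpha)}{\Gamma(\alpha)}.
\]
Summing over $K\ge K^*$ then pulls the $K$-dependent factors into $V(K^*;\xi)$ of \cref{eq:Vncoef}, leaving precisely \cref{eq:pepdist}.

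The hard part will be getting the combinatorics of that product exactly right; it is an argument of the restaurant-process / exchangeable-partition type. Three points deserve care: (a) the falling factorial $K!/(K-K^*)!$ must simultaneously encode the selection of the firing latent events out of the $K$ available and the bijection between them and the blocks of the (unordered) partition --- this parallels the bookkeeping in \cite{Miller2018} but with a Poisson prior on $K$ and a Poisson--Gamma rather than a Dirichlet--multinomial mechanism for block sizes; (b) the labelling factor $|\mcI_0|!\prod_k|\mcI_k|!/S!$ arising from exchangeability of the point process, its cancellation against the $1/|\mcI_k|!$ from the negative-binomial pmf, and the asymmetry introduced by the distinguished background block; and (c) a clean treatment of latent events that induce equal numbers of spikes, i.e. block sizes occurring with multiplicity, which is the familiar subtlety in such derivations. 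One convenient consistency check: the sum evaluates in closed form to $V(K^*;\xi)=(\psi T)^{K^*}\big(\frac{\beta}{1+\beta}\big)^{\alpha K^*}\exp\!\big(-\psi T\big(1-(\tfrac{\beta}{1+\beta})^{\alpha}\big)\big)$, which is exactly what Poisson thinning of the latent-event process predicts --- declaring each latent event ``empty'' independently with probability $q(0)$ leaves a Poisson number $K^*\sim\mathrm{Po}\big(\psi T\,(1-(\tfrac{\beta}{1+\beta})^{\alpha})\big)$ of non-empty sequences.
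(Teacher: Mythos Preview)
Your proposal is correct and follows essentially the same route as the paper: marginalize the amplitudes to negative binomials, account for the $K!/(K-K^*)!$ ways to match non-empty latent events to the blocks of $\mcI$ (the paper reaches this factor via an intermediate step through ordered parent assignments, you do it directly), and sum over $K\ge K^*$. Your concern (c) about repeated block sizes is a non-issue here since the blocks $\mcI_1,\ldots,\mcI_{K^*}$ are distinct index sets regardless of their cardinalities, and your closed-form evaluation of $V(K^*;\xi)$ via Poisson thinning is a pleasant sanity check that the paper does not include.
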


\begin{proof}
Our derivation roughly follows that of~\textcite{Miller2018} for mixture of finite mixture models, but we adapt it to Neyman-Scott processes.
The major difference is that the distribution above is over partitions of random size.
First, return to the generative process described in \Cref{sec:generative-process} and integrate over the latent event amplitudes to obtain the marginal distribution on sequence sizes given the total number of sequences~$K$.  We have,
\begin{align}
    p(S_0, \hdots, S_K \mid K, \Theta, \xi) &= \mathrm{Po}(S_0; \lambda^\varnothing T) \prod_{k=1}^K \int \mathrm{Po}(S_k; \gamma_k) \, \mathrm{Ga}(A_k; \alpha, \beta) \mathrm{d} A_k \\
    &= \mathrm{Po}(S_0; \lambda^\varnothing T) \prod_{k=1}^K \mathrm{NB}(S_k; \alpha, (1+\beta)^{-1}) \\
    &= \frac{1}{S_0!} (\lambda^\varnothing T)^{N_0} e^{-\lambda^\varnothing T} \prod_{k=1}^K \frac{\Gamma(S_k + \alpha)}{N_k! \, \Gamma(\alpha)} \left( \frac{\beta}{1+\beta} \right)^\alpha \left(\frac{1}{1+\beta} \right)^{S_k} \\
    &= \frac{1}{S_0!} (\lambda^\varnothing T)^{S_0} e^{-\lambda^\varnothing T} \left( \frac{\beta}{1+\beta} \right)^{K\alpha} \left(\frac{1}{1+\beta} \right)^{S-S_0} \prod_{k=1}^K \frac{\Gamma(S_k + \alpha)}{S_k! \, \Gamma(\alpha)} .
\end{align}
Let $\{u_s\}_{s=1}^S$ denote a set of sequence assignments.  There are~$\binom{S}{S_0,\ldots,S_K}$ parent assignments consistent with the sequence sizes~$S_0,\ldots,S_K$, and they are all equally likely under the prior, so the conditional probability of the parent assignments is,
\begin{align}
    p(\{u_s\}_{s=1}^S \mid K, \Theta, \xi) &= \frac{1}{S_0!} (\lambda^\varnothing T)^{S_0} e^{-\lambda^\varnothing T} \prod_{k=1}^K \frac{\Gamma(S_k + \alpha)}{S_k! \, \Gamma(\alpha)} \left( \frac{\beta}{1+\beta} \right)^\alpha \left(\frac{1}{1+\beta} \right)^{N_k} \, \binom{S}{S_{0}, \ldots, S_{K}}^{-1} \\
    &= \frac{1}{S!} (\lambda^\varnothing T)^{S_0} e^{-\lambda^\varnothing T} \left( \frac{\beta}{1+\beta} \right)^{K\alpha} \left(\frac{1}{1+\beta} \right)^{S-S_0} \prod_{k=1}^K \frac{\Gamma(S_k + \alpha)}{\Gamma(\alpha)}.
\end{align}

The parent assignments above induce a partition, but technically they assume a particular ordering of the latent events.  Moreover, if some of the latent events fail to produce any observed events, they will not be included in the partition.  In performing a change of variables from parent assignments to partitions, we need to sum over latent event assignments that produce the same partition.  There are~${\binom{K}{K^*} K^*! = \frac{K!}{(K-K^*)!}}$ such assignments if there are~$K$ latent events but only~$K^*$ partitions. Thus,
\begin{align}
p(\mcI \mid K, \Theta, \xi)
&= \frac{K!}{(K-K^*)!} \, \frac{1}{S!} (\lambda^\varnothing T)^{|\mcI_0|} e^{-\lambda^\varnothing T} \left( \frac{\beta}{1+\beta} \right)^{K\alpha} \left(\frac{1}{1+\beta} \right)^{S-|\mcI_0|} \prod_{k=1}^{K^*} \frac{\Gamma(|\mcI_k| + \alpha)}{\Gamma(\alpha)}.
\end{align}
Clearly,~$K$ must be at least~$K^*$ in order to produce the partition. 

Finally, we sum over the number of latent events~$K$ to obtain the marginal probability of the partition,
\begin{align}
p(\mcI \mid \Theta, \xi) &= \sum_{K=K^*}^\infty \mathrm{Po}(K; \psi T) \, p(\mcI \mid K, \Theta, \xi) \\
&= V(K^*; \xi) \, \frac{(\lambda^\varnothing T)^{|\mcI_0|} e^{-\lambda^\varnothing T}}{S! \, (1+\beta)^{S-|\mcI_0|}} 
\prod_{k=1}^{K^*} \frac{\Gamma(|\mcI_k|+\alpha)}{\Gamma(\alpha)},
\end{align}
where 
\begin{align}
\label{eq:V}
V(K^*; \xi) &= \sum_{K=K^*}^\infty \mathrm{Po}(K; \psi T) \, \frac{K!}{(K-K^*)!} \left(\frac{\beta}{1+\beta}\right)^{K \alpha}.
\end{align}
\end{proof}

\subsection{Marginal Likelihood of Spikes in a Partition}
\label{sec:marginal_likelihood}
Now we combine the prior on partitions of spike indices with a likelihood of spikes under that partition. 
To match the notation in the main text and \Cref{sec:gibbs}, let~$X = \{X_k\}_{k=0}^{K^*}$ denote a partition of the spikes where~$X_k = \{x_s: s \in \mcI_k\}$ denote the set of spikes in sequence~$k$. 
In this section we derive the marginal probability of spikes in a sequence, integrating out the sequence time and type.  
For the background spikes, there are no parameters to integrate and the marginal probability is simply,
\begin{align}
    p(X_0 \mid \Theta) &= \prod_{x_s \in X_0} \mathrm{Unif}(t_s; [0, T]) \, \mathrm{Cat}(n_s \mid [\lambda_1^\varnothing / \lambda^\varnothing, \ldots, \lambda_N^\varnothing / \lambda^\varnothing]) \\
    &= T^{-|X_0|} \prod_{x_s \in X_0} \frac{\lambda_{n_s}^\varnothing}{\lambda^\varnothing}.
\end{align}
For the spikes attributed to a latent event, however, we have to marginalize over the type and time of the event.  We have,
\begin{align}
    p(X_k \mid \Theta) &= 
    \sum_{r_k=1}^R p(r_k) \int_0^T p(\tau_k) \prod_{s \in \mcI_k} p(t_s, n_s \mid r_k, \tau_k, \Theta) \, \mathrm{d} \tau_k\\
    &= \sum_{r_k=1}^R \pi_{r_k} \int_0^T \mathrm{Unif}(\tau_k; [0, T]) \prod_{x_s \in X_k} \mathrm{Cat}(n_s \mid r_k, \Theta) \, \mcN(t_s \mid \tau_k +b_{n_s r_k}, c_{n_s r_k}) \, \mathrm{d} \tau_k \\
    &= \frac{1}{T} \sum_{r_k=1}^R \pi_{r_k} \int_0^T \prod_{x_s \in X_k} \frac{a_{n_s r_k}}{\sqrt{2 \pi c_{n_s r_k}}} \exp \left \{ -\frac{1}{2 c_{n_s r_k}} (t_s - \tau_k - b_{n_s r_k})^2 \right \} \, \mathrm{d} \tau_k.
\end{align}
Let $J_{sk} = 1/c_{n_s r_k}$, and~$h_{sk} = (t_s - b_{n_s r_k})/c_{n_s r_k}$.  Then, 
\begin{align}
    p(X_k \mid \Theta)
    &= \frac{1}{T} \sum_{r_k=1}^R \pi_{r_k} \int_0^T \prod_{x_s \in X_k} a_{n_s r_k}   \sqrt{\frac{J_{sk}}{2 \pi}} \exp \left \{ -\tfrac{1}{2} J_{sk} \tau_k^2  + h_{sk} \tau_k  -\tfrac{1}{2} h_{sk}^2 J_{sk}^{-1} \right \} \, \mathrm{d} \tau_k \\
    &= \frac{1}{T} \sum_{r_k=1}^R \pi_{r_k}  \prod_{x_s \in X_k} \frac{a_{n_s r_k}}{ Z(J_{sk}, h_{sk})} \int_0^T  \exp \left \{ -\tfrac{1}{2} ({\textstyle \sum_{x_s \in X_k}} J_{sk}) \tau_k^2 + ({\textstyle \sum_{x_s \in X_k}} h_{sk}) \tau_k  \right \} \, \mathrm{d} \tau_k \\
    &\approx \frac{1}{T} \sum_{r_k=1}^R \pi_{r_k} \left( \prod_{x_s \in X_k} a_{n_s r_k} \right) \frac{Z({\textstyle \sum_{x_s \in X_k}} J_{sk}, {\textstyle \sum_{x_s \in X_k}} h_{sk})}{\prod_{x_s \in X_k}Z(J_{sk}, h_{sk})},
\end{align}
where
\begin{align}
    Z(J, h) = (2 \pi)^{1/2} J^{-1/2} \exp\left \{\tfrac{1}{2} h^2 J^{-1} \right\}
\end{align}
is the normalizing constant of a Gaussian density in information form with precision~$J$ and linear coefficient~$h$. For a sequence with a single spike, this reduces to,
\begin{align}
    p(x_s \mid \Theta)
    &\approx \frac{1}{T} \sum_{r_k=1}^R \pi_{r_k} a_{n_s r_k}.
\end{align}
The approximations above are due to the truncated integral over~$[0, T]$ rather than the whole real line.  In practice, this truncation is negligible for sequences far from the boundaries of the interval.

\subsection{Collapsed Gibbs Sampling the Partitions}
The conditional distribution on partitions given data and model parameters is given by,
\begin{align}
    p(\mcI \mid \{x_s\}_{s=1}^S, \Theta, \xi) &\propto 
    p(\mcI \mid \Theta, \xi) \, p(\{x_s\}_{s=1}^S \mid \mcI, \Theta)
    \label{eq:conditional-distribution-on-partition-1}
    \\
    &= p(\mcI \mid \Theta, \xi) \, p(X_0 \mid \Theta) \, \prod_{k=1}^{K^*} p(X_k \mid \Theta).
    \label{eq:conditional-distribution-on-partition-2}
\end{align}
Let~$X \setminus x_s$ denote the partition with spike~$x_s$ removed.  
Though it is slightly inconsistent with the proof in~\Cref{subsec:partitions}, let~$u_s$ denote the assignment of spike~$x_s$, as in the main text. 
Moreover, let~$S_k$ denote the size of the~$k$-th sequence in the partition, \emph{not including spike~$x_s$}.
We now consider the relative probability of the full partition~$X$ when~$x_s$ is added to each one of the possible parts: the background, an existing sequence, or a new sequence.

 For the background assignment,
\begin{align}
    p(u_s=0 \mid x_s, X \setminus x_s, \Theta, \xi) &\propto \frac{p(u_s=0, x_s, X \setminus x_s, \Theta, \xi)}{p(X \setminus x_s, \Theta, \xi)} \\
    &= \frac{V(K^*; \xi) \, \frac{(\lambda^\varnothing T)^{S_0+1} e^{-\lambda^\varnothing T}}{S! \, (1+\beta)^{S-S_0-1}} p(X_0 \cup x_s \mid \Theta) \prod_{k=1}^{K^*} \frac{\Gamma(S_k+\alpha)}{\Gamma(\alpha)} p(X_k \mid \Theta)}{V(K^*; \xi)  \, \frac{(\lambda^\varnothing T)^{S_0} e^{-\lambda^\varnothing T}}{S! \, (1+\beta)^{S-S_0}} p(X_0 \mid \Theta) \prod_{k=1}^{K^*} \frac{\Gamma(S_k+\alpha)}{\Gamma(\alpha)} p(X_k \mid \Theta)}\\
    &= \lambda^\varnothing T (1+\beta) \frac{1}{T} \frac{\lambda_{n_s}^\varnothing}{\lambda^\varnothing} \\
    &= (1+\beta) \lambda_{n_s}^\varnothing.
\end{align}
By a similar process, we arrive at the conditional probabilities of adding a spike to an existing sequence,
\begin{align}
    p(u_s=k \mid x_s, X \setminus x_s, \Theta, \xi) 
    &\propto \frac{\Gamma(S_k + \alpha + 1)}{\Gamma(S_k + \alpha)} \frac{p(X_k \cup x_s \mid \Theta)}{p(X_k \mid \Theta)} \\
    &= (S_k + \alpha) \, p(x_s \mid X_k, \Theta).
\end{align}
The predictive likelihood can be obtained via the ratio of marginal likelihoods derived above, or by explicitly calculating the categorical posterior distribution on types~$r_k$ and then the categorical and Gaussian posterior predictive distributions of~$n_s$ and $t_s$, respectively, given the type and the other spikes.  The latter approach is what is presented in the main text.

Finally, the conditional probability of assigning a spike to a new sequence simplifies as,
\begin{align}
    p(u_s=K^* + 1 \mid X \setminus x_s, \Theta, \xi) 
    &\propto  \left( \frac{\Gamma(\alpha + 1)}{\Gamma(\alpha)}\right) \left( \frac{V(K^* + 1; \xi)}{V(K^*; \xi)} \right) p(x_s \mid \Theta) \\
    &= \alpha  \left( \frac{\sum_{J=K^*+1}^\infty \frac{1}{J!} e^{-\psi T} (\psi T)^J \, \frac{J!}{(J-K^*-1)!} \left(\frac{\beta}{1+\beta}\right)^{J \alpha}}{\sum_{K=K^*}^\infty \frac{1}{K!} e^{-\psi T} (\psi T)^K \, \frac{K!}{(K-K^*)!} \left(\frac{\beta}{1+\beta}\right)^{K \alpha}} \right)   \left(\frac{1}{T} \sum_{r_k=1}^R \pi_{r_k} a_{n_s r_k}\right) \\
    &= \alpha \left( \frac{\sum_{K=K^*}^\infty (\psi T)^{K+1} \, \frac{1}{(K-K^*)!} \left(\frac{\beta}{1+\beta}\right)^{(K+1) \alpha}}{\sum_{K=K^*}^\infty (\psi T)^K \, \frac{1}{(K-K^*)!} \left(\frac{\beta}{1+\beta}\right)^{K \alpha}} \right)  \, \left(\frac{1}{T} \sum_{r_k=1}^R \pi_{r_k} a_{n_s r_k}\right)  \\
    &= \alpha \left(\frac{\beta}{1 + \beta} \right)^{\alpha}  \psi T  \left(\frac{1}{T} \sum_{r_k=1}^R \pi_{r_k} a_{n_s r_k}\right) \\
    &= \alpha \, \left(\frac{\beta}{1 + \beta} \right)^{\alpha} \psi \sum_{r_k=1}^R \pi_{r_k} a_{n_s r_k}.
\end{align}

\subsection{Gibbs Sampling the Latent Event Parameters}

Given the partition and global parameters, it is straightforward to sample the latent events.  In fact, most of the calculations were derived in \Cref{sec:marginal_likelihood}.  The conditional distribution of the event type is,
\begin{align}
    p(r_k \mid X_k, \Theta) &\propto
    p(r_k) \int_0^T p(\tau_k) \prod_{x_s \in X_k} p(t_s, n_s \mid r_k, \tau_k, \Theta) \, \mathrm{d} \tau_k\\
    &\approx \pi_{r_k} \left( \prod_{x_s \in X_k} a_{n_s r_k} \right) \frac{Z({\textstyle \sum_{x_s \in X_k}} J_{sk}, {\textstyle \sum_{x_s \in X_k}} h_{sk})}{\prod_{x_s \in X_k}Z(J_{sk}, h_{sk})},
\end{align}
where, again, the approximation comes from truncating the integral to the range~$[0, T]$, and is negligible in our cases.

Given the type, the latent event time is conditionally Gaussian,
\begin{align}
    p(\tau_k \mid r_k, X_k, \Theta) &\propto
    p(\tau_k) \prod_{x_s \in X_k} p(t_s \mid r_k, n_s, \tau_k, \Theta) \\
    &= \mathrm{Unif}(\tau_k; [0, T]) \prod_{x_s \in X_k} \mcN(t_s \mid \tau_k + b_{n_s r_k}, c_{n_s r_k}) \\
    &\propto \mcN(\tau_k \mid \mu_k, \sigma_k^2) 
\end{align}
where
\begin{align}
    \sigma_k^2 &= \left( \sum_{x_s \in X_k} J_{sk} \right)^{-1} \\
    \mu_k^2 &= \sigma_k^2 \left( \sum_{x_s \in X_k} h_{sk} \right).
\end{align}

Finally, the amplitude is independent of the type and time, and its conditional distribution is given by,
\begin{align}
    p(A_k \mid X_k, \xi) 
    &\propto \mathrm{Ga}(A_k \mid \alpha, \beta) \,
    \mathrm{Po}(S_k \mid A_k) \\
    &\propto \mathrm{Ga}(A_k \mid \alpha + S_k, \, \beta_\varnothing + 1).
\end{align}

\subsection{Gibbs Sampling the Global Parameters}
Finally, the Gibbs updates of the global parameters are simple due to their conjugate priors.

\paragraph{Background rates}
The conditional distribution of the background rates is,
\begin{align}
    p(\lambda_n^\varnothing \mid X_0, \xi) 
    &\propto \mathrm{Ga}(\lambda_n^\varnothing \mid \alpha_\varnothing, \beta_\varnothing) \,
    \mathrm{Po}(|\{x_s: x_s \in X_0, n_s = n\}| \mid \lambda_n^\varnothing T) \\
    &\propto \mathrm{Ga}(\lambda_n^\varnothing \mid \alpha_\varnothing + \sum_{x_s \in X_0} \mathbb{I}[n_s = n], \, 
    \beta_\varnothing + T).
\end{align}

\paragraph{Sequence type probabilities}
The conditional distribution of the sequence type probability vector~$\pi$ is,
\begin{align}
    p(\pi \mid \{(r_k, \tau_k, A_k)\}_{k=1}^{K^*}, \xi) 
    &\propto \mathrm{Dir}(\pi \mid \gamma \mathbf{1}_R) \prod_{k=1}^{K^*} \mathrm{Cat}(r_k \mid \pi) \\
    &\propto \mathrm{Dir}\left(\pi \mid \left[\gamma + \sum_k \mathbb{I}[r_k = 1], \ldots,  \gamma + \sum_k \mathbb{I}[r_k = 1] \right] \right).
\end{align}

\paragraph{Neuron weights for each sequence type}
The conditional distribution of the neuron weight vector~$\mathbf{a}_r$ is,
\begin{align}
    p(\mathbf{a}_r \mid X, \{(r_k, \tau_k, A_k)\}_{k=1}^{K^*}, \xi) 
    &\propto \mathrm{Dir}(\mathbf{a}_r \mid \varphi \mathbf{1}_N) \prod_{k=1}^{K^*} \prod_{x_s \in X_k} \mathrm{Cat}(n_s \mid \mathbf{a}_{r_k}), \\
    &\propto \mathrm{Dir}\left(\mathbf{a}_r \mid \boldsymbol{\phi}_r \right) \\
    \phi_{rn} &= \varphi + \sum_k \sum_{x_s \in X_k} \mathbb{I}[r_k = r \wedge n_s = n].
\end{align}

\paragraph{Neuron widths and delays}
The conditional distribution of the neuron delays~$b_{nr}$ and widths~$c_{nr}$ is,
\begin{align}
    p(b_{nr}, c_{nr} \mid X, \{(r_k, \tau_k, A_k)\}_{k=1}^{K^*}, \xi)
    &\propto \mathrm{Inv-}\chi^2(c_{nr}\mid \nu, \sigma^2) \, \mcN(b_{nr}\mid 0, c_{nr}/\kappa) \prod_{k=1}^{K^*} \prod_{x_s \in X_k} \left(\mcN(t_s \mid \tau_k + b_{nr}, c_{nr}) \right)^{\mathbb{I}[r_k=r \wedge n_s=n]} \\
    &\propto \mathrm{Inv-}\chi^2(c_{nr} \mid \nu_{nr}, \sigma_{nr}^2) \, \mcN(b_{nr} \mid \mu_{nr}, c_{nr} / \kappa_{nr}) 
\end{align}
where
\begin{align}
    S_{nr} &= \sum_{k=1}^{K^*} \sum_{x_s \in X_k} {\mathbb{I}[r_k=r \wedge n_s=n]} \\
    \nu_{nr} &= \nu + S_{nr} \\
    \sigma_{nr}^2 &= \frac{\nu \sigma^2 + \sum_{k=1}^{K^*} \sum_{x_s \in X_k} (t_s - \tau_k)^2 \cdot \mathbb{I}[r_k=r \wedge n_s=n]}{\nu + S_{nr}} \\
    \kappa_{nr} &= \kappa + S_{nr} \\
    \mu_{nr} &= \frac{\sum_{k=1}^{K^*} \sum_{x_s \in X_k} (t_s - \tau_k) \cdot \mathbb{I}[r_k=r \wedge n_s=n]}{\kappa + S_{nr}}.
\end{align}

\subsection{Split-Merge Sampling Moves}

As discussed in the main text, when sequences containing of a small number of spikes are unlikely under the prior, Gibbs sampling can be slow to mix over the posterior of spike partitions since the probability of forming new sequences (i.e. singleton clusters) is very low.
In addition to the annealed sampling approach outlined in the main text, we adapted the split-merge sampling method proposed by \textcite{Jain2004} for Dirichlet process mixture models to PP-Seq.\footnote{See also \textcite{Jain2007} for the case of non-conjugate priors.}
For simplicity, we implemented randomized split-merge proposals---i.e. without the additional restricted Gibbs sweeps described proposed in \textcite{Jain2004}.
In practice, we found that these random proposals were sufficient for the sampler to accept both split and merge moves at a satisfactorily high rate.

Briefly, our method starts by randomly choosing two spikes $(t_i, n_i)$ and $(t_j, n_j)$ that are not assigned to the background partition and satisfy $|t_i - t_j| < W$ for some user-specified time window $W$.
One could set $W = T$, in which case all pairs of spikes not assigned to the background are considerd; however, we will see that this will generally result in proposals that are extremely likely to be rejected, so it is advisable to set $W$ to be an upper bound on the expected sequence length to encourage faster mixing.
After identifying the pair of spikes, we propose to either merge their sequences (if $u_i \neq u_j$), or, if they belong to the same sequence ($u_i = u_j = k$) we propose to form two new sequences, each containing one of the two spikes, the remaining $|X_k| - 2$ spikes are randomly assigned with equal probability.

Given the proposed split or merge move, $\mcI \rightarrow \mcI^*$, the Metropolis-Hastings acceptance probability is:
\begin{equation}
\min \left [ 1, \frac{q(\mcI \mid \mcI^* )}{q(\mcI^* \mid \mcI )} \frac{p (\mcI^* \mid \{ x_s \}_{s=1}^S , \Theta, \xi) )}{p (\mcI \mid \{ x_s \}_{s=1}^S , \Theta, \xi)} \right ]
\end{equation}
where $q(\mcI^* \mid \mcI )$ is the proposal density, and $p (\mcI \mid \{ x_s \}_{s=1}^S , \Theta, \xi)$ is given in \cref{eq:conditional-distribution-on-partition-1,eq:conditional-distribution-on-partition-2}.
The sampling method then directly follows \textcite{Jain2004}.
The ratio of proposal probabilities for split moves is:
\begin{equation}
\frac{q(\mcI \mid \mcI^\text{split})}{q(\mcI^\text{split} \mid \mcI)} = \left ( \frac{1}{2} \right )^{-(|X_k| - 2)}
\end{equation}
And the ratio for merge moves is:
\begin{equation}
\frac{q(\mcI \mid \mcI^\text{merge})}{q(\mcI^\text{merge} \mid \mcI)} = \left ( \frac{1}{2} \right )^{(|X_k| - 2)}
\end{equation}

\section{Time-Warped Sequences}
To account for sequences that unfold at different speeds, we incorporate a time-warping component into the generative model.  Each sequence is endowed with a \emph{warp value}, which is drawn from a discrete distribution, 
\begin{align}
    \omega_k &\sim \sum_{f=1}^F \eta_f \delta_{w_f}(\omega_k),
\end{align}
where~$\eta_f \geq 0$ and~$\sum_{f=1}^F \eta_f = 1$ are the probabilities of the corresponding warp values~$w_f > 0$.  In this paper, we set
\begin{align}
    w_f &= w_F^{-1 + 2(f-1)/(F-1)} \\
    \eta_f &\propto \mcN\left(f \mid \tfrac{F-1}{2}, \sigma_w^2 \right).
\end{align}
The hyperparameters include the number of warp values~$F$, the maximum warp value~$w_F$, and the variance of the warp probabilities~$\sigma_w^2$.  We use an odd number of warp values so that~$w_{(F-1)/2} = 1$.

The warp value changes the distribution of spike times in the corresponding sequence by scaling the mean and standard deviation,
\begin{align}
t_s^{(k)} &\sim \mcN(\tau_k + \omega_k b_{n_s^{(k)} r_k}, \omega_k^2 c_{n_s^{(k)} r_k}).
\end{align}
Note that one could also choose to linearly scale the variance rather than the standard deviation, as in a warped random walk.  Ultimately, this is a subjective modeling choice, and scaling the standard deviation leads to slightly easier Gibbs updates later on.

We can equivalently endow each sequence with a latent \emph{warp index}~$f_k \in \{1, \ldots, F\}$ where~$p(f_k) \propto \eta_{f_k}$ and,
\begin{align}
t_s^{(k)} &\sim \mcN\left(\tau_k + w_{f_k} b_{n_s^{(k)} r_k}, w_{f_k}^2 c_{n_s^{(k)} r_k} \right).
\end{align}
Since there are only a finite number of warp values, we can treat the warp index and the sequence type as a single discrete latent variable~$(r_k, f_k) \in \{1, \ldots, R\} \times \{1, \ldots, F\}$.  This formulation allows us to re-use all of the collapsed Gibbs updates derived above, replacing our inferences over~$r_k$ with inferences of~$(r_k, f_k)$.  Computationally, this increases the run-time by a factor of~$F$.

Given the sequence types, times, and warp values, the conditional distribution of the neuron delays~$b_{nr}$ and widths~$c_{nr}$ is,
\begin{align}
    p(b_{nr}, c_{nr} \mid X, & \{(r_k, \tau_k, f_k, A_k)\}_{k=1}^{K^*}, \xi) \\
    &\propto \mathrm{Inv-}\chi^2(c_{nr}\mid \nu, \sigma^2) \, \mcN(b_{nr}\mid 0, c_{nr}/\kappa) \prod_{k=1}^{K^*} \prod_{x_s \in X_k} \left(\mcN(t_s \mid \tau_k + w_{f_k} b_{nr}, w_{f_k}^2 c_{nr}) \right)^{\mathbb{I}[r_k=r \wedge n_s=n]} \\
    &\propto \mathrm{Inv-}\chi^2(c_{nr} \mid \nu_{nr}, \sigma_{nr}^2) \, \mcN(b_{nr} \mid \mu_{nr}, c_{nr} / \kappa_{nr}) 
\end{align}
where
\begin{align}
    S_{nr} &= \sum_{k=1}^{K^*} \sum_{x_s \in X_k} {\mathbb{I}[r_k=r \wedge n_s=n]} \\
    \Delta_{sk} &= \frac{t_s - \tau_k}{w_{f_k}} \\
    \nu_{nr} &= \nu + S_{nr} \\
    \sigma_{nr}^2 &= \frac{\nu \sigma^2 + \sum_{k=1}^{K^*} \sum_{x_s \in X_k} \Delta_{sk}^2 \cdot \mathbb{I}[r_k=r \wedge n_s=n]}{\nu + S_{nr}} \\
    \kappa_{nr} &= \kappa + S_{nr} \\
    \mu_{nr} &= \frac{\sum_{k=1}^{K^*} \sum_{x_s \in X_k} \Delta_{sk} \cdot \mathbb{I}[r_k=r \wedge n_s=n]}{\kappa + S_{nr}}.
\end{align}
If instead you choose to linearly scale the variances, as mentioned above, the conditional distribution of the delays and widths is no longer inverse~$\chi^2$.  However, it is straightforward to sample the delays and widths one at a time from their univariate conditional distributions. 

\section{Supplemental Details on Experiments}

\subsection{ROC curve comparison of convNMF and PP-Seq}

Simulated datasets consisted of $N=100$ neurons, $T=2000$ time units, $R=1$ sequence type, sequence event rate $\psi = 0.02$, a default average background rate of $\lambda_n^\varnothing = 0.03$, default settings of $\alpha = 225$ and $\beta = 7.5$.
Further, we set $\kappa_{nr} = c_{nr}$ and $c_{nr} = 0.04$ so that neuron offsets had unit standard deviation and small jitter by default.
For the case of ``jitter noise'' we modified $c_{nr}$ but continued to set $\kappa_{nr} = c_{nr}$ to preserve the expected distribution of neuron offsets.

For each synthetic dataset, we fit convNMF models with $R=1$ component and a time bin size of 0.2 time units.
We optimized using alternating projected gradient descent with a backtracking line search.
Each convNMF model produced a temporal factor $\bh \in \R_+^B$ (where $B = T / 0.2 = 40000$ is the total number of time bins).
We encoded the ground truth sequence times in a binary vector of length $B$ and computed ROC curves by thresholing $\bh$ over a fine grid of values over the range $[0, \text{max}(\bh)]$.
We consider each timebin, indexed by $b$, a false positive if $h_b$ exceeds the threshold, but no ground truth sequence was present in the timebin.
Likewise, a timebin is a true positive when $h_b$ exceeds the threshold and a sequence event did occur in this timebin.

We repeated a similar analysis with 100 (approximate) samples from the PP-Seq posterior distribution.
Specifically, we discretized the sampled sequence event times, i.e. $\{\tau_k \}_{k=1}^{K}$ for every sample, into the same $B$ time bins used by convNMF.
For every timebin we computed the empirical probability that it contained a sampled event from the model, averaging over MCMC samples.
This resulted in a discretized, nonnegative temporal factor that is directly analogous to the factor $\bh$ produced convNMF.
We compute an ROC curve by the same method outlined above.

Unfortunately, both convNMF and PP-Seq parameters are only weakly identifiable.
For example, in PP-Seq, one can add a constant to all latent event times, $\tau_k \mapsto \tau_k + C$, and subtract the same constant from all neuron offsets $b_{nr} \mapsto b_{nr} - C$.
This manipulation does not modify the likelihood of the data (it does, however, affect the probability of the parameters under the prior, and so we say the model is ``weakly identifiable'').
The result of this is that both PP-Seq and convNMF may consistently misestimate the ``true'' sequence times by a small constant of time bins.
To discount this nuisance source of model error we repeated the above analysis on shifted copies of the temporal factor (up to 20 time bins in each direction); we then selected the optimal ROC curve for each model and computed the area under the curve to quantify accuracy.

\subsection{Hyperparameter sweep on hippocampal data}

We performed MCMC inference on 2000 randomly sampled hyperparameter sets, masking out a random 7.5\% of the data as a validation set on every run.
We fixed the following hyperparameter values: $\gamma = 3$, $\varphi = 1$, $\mu_{nr} = 0$ for every neuron and sequence type, $\nu_{nr} = 4$ for every neuron and sequence type.
We also fixed the following hyperparameters, pertaining to time warping: $F = 10$, $\sigma^2_w = 100$.
We randomized the number of sequence types $R \in \{1, 2, 3, 4\}$ uniformly.
We randomized the maximum warp value $w_F$ uniformly on (1, 1.5).
We randomized the mean sequence amplitude, $\alpha / \beta$, log-uniformly over the interval $(10^{2}, 10^{4})$, and set the variance of the sequence amplitude $\alpha / \beta^2$ equal to the mean.
We randomized the expected total background amplitude, $\lambda^\varnothing$, log-uniformly over the interval $(10^{2}, 10^{4})$ and set the variance of this parameter equal to its mean.
We randomized the sequence rate $\psi$ log-uniformly over $(10^{-3}, 10^{-1})$.
Finally, we randomized the neuron response widths $c_{nr}$ log-uniformly over $(10^{-1}, 10^{1})$.

We randomly initialized all spike assignments either with convNMF or the annealed MCMC procedure outlined in the main text (we observed successful outcomes in both cases). 
In annealed runs, we set an initial temperature of 500 and exponentially relaxed the temperature to one over 20 stages, each containing 100 Gibbs sweeps.
(Recall that this temperature parameter scales the variance of $\mathrm{Ga}(\alpha, \beta)$ while preserving the mean of this prior distribution.)
After annealing was completed we performed 100 final Gibbs sweeps over the data, the final half of which were used as approximate samples from the posterior.
In this final stage of sampling, we also performed 1000 randomized split-merge Metropolis Hastings moves after every Gibbs sweep.

All MCMC runs were performed using our Julia implementation of PP-Seq, run in parallel on the Sherlock cluster at Stanford University.
Each job was allocated 1 CPU and 8 GB of RAM.

\end{appendices}

\clearpage
\printbibliography

\end{changemargin}
\end{refsection}
\end{document}